\theoremstyle{plain}
\newtheorem{assumption}{Assumption}
\newtheorem{lemma}{Lemma}
\newtheorem{theorem}{Theorem}
\theoremstyle{definition}
\newtheorem{definition}{Definition}
\title{Demystifying ResNet}
\author[1]{Sihan Li}
\author[2]{Jiantao Jiao}
\author[2]{Yanjun Han}
\author[2]{Tsachy Weissman}
\affil[1]{Tsinghua University}
\affil[2]{Stanford University}
\begin{document}

\maketitle

\begin{abstract}
The Residual Network (ResNet), proposed in~\citet{he2015deep}, utilized shortcut connections to significantly reduce the difficulty of training, which resulted in great performance boosts in terms of both training and generalization error.

It was empirically observed in~\citet{he2015deep} that stacking more layers of residual blocks with shortcut $2$ results in smaller training error, while it is not true for shortcut of length $1$ or $3$. We provide a theoretical explanation for the uniqueness of shortcut $2$.

We show that with or without nonlinearities, by adding shortcuts that have depth two, the condition number of the Hessian of the loss function at the zero initial point is depth-invariant, which makes training very deep models no more difficult than shallow ones. Shortcuts of higher depth result in an extremely flat (high-order) stationary point initially, from which the optimization algorithm is hard to escape. The shortcut $1$, however, is essentially equivalent to no shortcuts, which has a condition number exploding to infinity as the number of layers grows. We further argue that as the number of layers tends to infinity, it suffices to only look at the loss function at the zero initial point.

Extensive experiments are provided accompanying our theoretical results. We show that initializing the network to small weights with shortcut $2$ achieves significantly better results than random Gaussian (Xavier) initialization, orthogonal initialization, and shortcuts of deeper depth, from various perspectives ranging from final loss, learning dynamics and stability, to the behavior of the Hessian along the learning process.
\end{abstract}

\section{Introduction}

Residual network (ResNet) was first proposed in~\citet{he2015deep} and extended in~\citet{he2016identity}.
It followed a principled approach to add shortcut connections every two layers to a VGG-style network~\citep{simonyan2014very}. The new network becomes easier to train, and achieves both lower training and test errors. Using the new structure, \citet{he2015deep} managed to train a network with 1001 layers, which was virtually impossible before.
Unlike Highway Network~\citep{srivastava2015training,srivastava2015highway} which not only has shortcut paths but also borrows the idea of gates from LSTM~\citep{sainath2015convolutional}, ResNet does not have gates.
Later \citet{he2016identity} found that by keeping a clean shortcut path, residual networks will perform even better.

Many attempts have been made to improve ResNet to a further extent.
``ResNet in ResNet''~\citep{targ2016resnet} adds more convolution layers and data paths to each layer, making it capable of representing several types of residual units.
``ResNets of ResNets''~\citep{zhang2016residual} construct multi-level shortcut connections, which means there exist shortcuts that skip multiple residual units.
Wide Residual Networks~\citep{zagoruyko2016wide} makes the residual network shorter but wider, and achieves state of the art results on several datasets while using a shallower network.
Moreover, some existing models are also reported to be improved by shortcut connections, including Inception-v4~\citep{szegedy2016inception}, in which shortcut connections make the deep network easier to train.

Understanding why the shortcut connections in ResNet could help reduce the training difficulty is an important question. Indeed, \citet{he2015deep} suggests that layers in residual networks are learning residual mappings, making them easier to represent identity mappings, which prevents the networks from degradation when the depths of the networks increase.
However, \citet{veit2016residual} claims that ResNets are actually ensembles of shallow networks, which means they do not solve the problem of training deep networks completely.  In \citet{hardt2016identity}, they showed that for deep linear residual networks with shortcut $1$ does not have spurious local minimum, and analyzed and experimented with a new ResNet architecture with shortcut $2$.

We would like to emphasize that it is not true that every type of identity mapping and shortcut works. Quoting \citet{he2015deep}:

\vspace{.5ex}
\begin{center}
\parbox{.9\textwidth}{~~\emph{ \small ``But if $\mathcal{F}$ has only a single layer, Eqn.(1) is similar to a linear layer:
$y = W_1 x + x$, for which we have not observed advantages.'' }}
\end{center}

\begin{center}
\parbox{.9\textwidth}{~~\emph{ \small ``Deeper non-bottleneck ResNets (e.g., Fig.~5 left) also gain accuracy
from increased depth (as shown on CIFAR-10), but are not as economical
as the bottleneck ResNets. So the usage of bottleneck designs is mainly due
to practical considerations. We further note that the degradation problem
of plain nets is also witnessed for the bottleneck designs. '' }}
\end{center}
\vspace{1.5ex}

Their empirical observations are inspiring. First, the shortcut $1$ mentioned in the first paragraph do not work. It clearly contradicts the theory in~\citet{hardt2016identity}, which forces us to conclude that the nonlinear network behaves essentially in a different manner from the linear network. Second, noting that the \emph{non-bottleneck} ResNets have shortcut $2$, but the bottleneck ResNets use shortcut $3$, one sees that shortcuts with depth three also do not ease the optimization difficulties.

In light of these empirical observations, it is sensible to say that a reasonable theoretical explanation must be able to distinguish shortcut $2$ from shortcuts of other depths, and clearly demonstrate why shortcut $2$ is special and is able to ease the optimization process so significantly for deep models, while shortcuts of other depths may not do the job. Moreover, analyzing deep linear models may not be able to provide the right intuitions.

\section{Main results}

We provide a theoretical explanation for the unique role of shortcut of length $2$. Our arguments can be decomposed into two parts.

\begin{enumerate}
\item For very deep (general) ResNet, it suffices to initialize the weights at zero and search locally: in other words, there exist a global minimum whose weight functions for each layer have vanishing norm as the number of layers tends to infinity.
\item For very deep (general) ResNet, the loss function at the zero initial point exhibits radically different behavior for shortcuts of different lengths. In particular, the Hessian at the zero initial point for the $1$-shortcut network has condition number growing unboundedly when the number of layers grows, while the $2$-shortcut network enjoys a depth-invariant condition number. ResNet with shortcut length larger than $2$ has the zero initial point as a high order saddle point (with Hessian a zero matrix), which may be difficult to escape from in general.
\end{enumerate}

We provide extensive experiments validating our theoretical arguments. It is mathematically surprising to us that although the deep linear residual networks with shortcut $1$ has no spurious local minimum~\citep{hardt2016identity}, this result does not generalize to the nonlinear case and the training difficulty is not reduced. Deep residual network of shortcut length $2$ admits spurious local minimum in general (such as the zero initial point), but proves to work in practice.

As a side product, our experiments reveal that \emph{orthogonal initialization~\citep{saxe2013exact} is suboptimal}. Although better than Xavier initialization~\citep{glorot2010understanding}, the initial condition numbers of the networks still explode as the networks become deeper, which means the networks are still initialized on ``bad'' submanifolds that are hard to optimize using gradient descent.

\section{Model}

We first generalize a linear network by adding shortcuts to it to make it a \textit{linear residual network}.
We organize the network into $R$ \textit{residual units}.
The $r$-th residual unit consists of $L_r$ layers whose weights are $W^{r,1}, \ldots, W^{r,L_r-1}$, denoted as the \textit{transformation path}, as well as a shortcut $S^r$ connecting from the first layer to the last one, denoted as the \textit{shortcut path}.
The input-output mapping can be written as

\begin{equation}
    y = \prod_{r = 1}^R (\prod_{l = 1}^{L_r - 1} W^{r,l}  + S^r) x = W x,
\end{equation}

where $x \in \mathbb{R}^{d_x}, y \in \mathbb{R}^{d_y}, W \in \mathbb{R}^{d_y \times d_x}$.
Here if $b \ge a$, $\prod_{i=a}^b W^i$ denotes $W^b W^{(b-1)} \cdots W^{(a+1)} W^a$, otherwise it denotes an identity mapping.
The matrix $W$ represents the combination of all the linear transformations in the network.
Note that by setting all the shortcuts to zeros, the network will go back to a $(\sum_r (L_r - 1) + 1)$-layer plain linear network.

Instead of analyzing the general form, we concentrate on a special kind of linear residual networks, where all the residual units are the same.

\begin{definition}
    A linear residual network is called an \textit{$n$-shortcut linear network} if
    \begin{enumerate}
        \item its layers have the same dimension (so that $d_x = d_y$);
        \item its shortcuts are identity matrices;
        \item its shortcuts have the same depth $n$.
    \end{enumerate}
    The input-output mapping for such a network becomes

    \begin{equation}
        y = \prod_{r = 1}^R (\prod_{l = 1}^n W^{r,l}  + I_{d_x}) x = W x,
    \end{equation}

    where $W^{r,l} \in \mathbb{R}^{d_x \times d_x}$.

\end{definition}

Then we add some activation functions to the networks.
We concentrate on the case where activation functions are on the transformation paths, which is also the case in the latest ResNet~\citep{he2016identity}.

\begin{definition}
    An $n$-shortcut linear network becomes an \textit{$n$-shortcut network} if element-wise activation functions $\sigma_{\mathrm{pre}}(x), \sigma_{\mathrm{mid}}(x), \sigma_{\mathrm{post}}(x)$ are added at the transformation paths, where on a transformation path, $\sigma_{\mathrm{pre}}(x)$ is added before the first weight matrix, $\sigma_{\mathrm{mid}}(x)$ is added between two weight matrices and $\sigma_{\mathrm{post}}(x)$ is added after the last weight matrix.
\end{definition}

\begin{figure}[htbp]
    \centering
    \includegraphics[width=0.5\columnwidth]{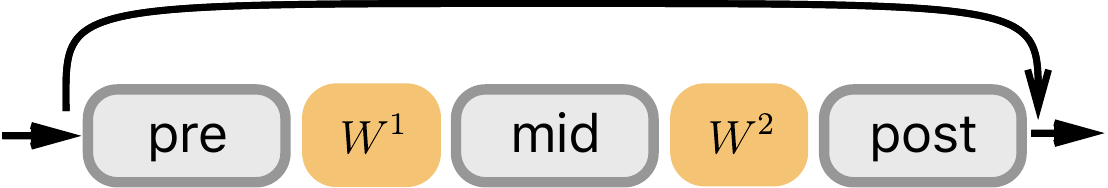}
    \caption{An example of different position for nonlinearities in a residual unit of a $2$-shortcut network.}
    \label{fig:nonlinearities-positions}
\end{figure}

Note that $n$-shortcut linear networks are special cases of $n$-shortcut networks, where all the activation functions are identity mappings.

\section{Theoretical study}

\subsection{Small weights property of (near) global minimum}

ResNet uses MSRA initialization~\citep{he2015delving}. It is a kind of scaled Gaussian initialization that tries to keep the variances of signals along a transformation path, which is also the idea behind Xavier initialization~\citep{glorot2010understanding}.
However, because of the shortcut paths, the output variance of the entire network will actually explode as the network becomes deeper.
Batch normalization units partly solved this problem in ResNet, but still they cannot prevent the large output variance in a deep network.

A simple idea is to zero initialize all the weights, so that the output variances of residual units stay the same along the network.
It is worth noting that as found in~\citet{he2015deep}, the deeper ResNet has smaller magnitudes of layer responses.
This phenomenon has been confirmed in our experiments. As illustrated in Figure~\ref{fig:resnet-norm} and Figure~\ref{fig:weight-norms}, the deeper a residual network is, the small its average Frobenius norm of weight matrices is, both during the training process and when the training ends.
Also, \citet{hardt2016identity} proves that if all the weight matrices have small norms, a linear residual network with shortcut of length $1$ will have no critical points other than the global optimum.

\begin{figure}[htbp]
    \centering
    \includegraphics[width=0.7\columnwidth]{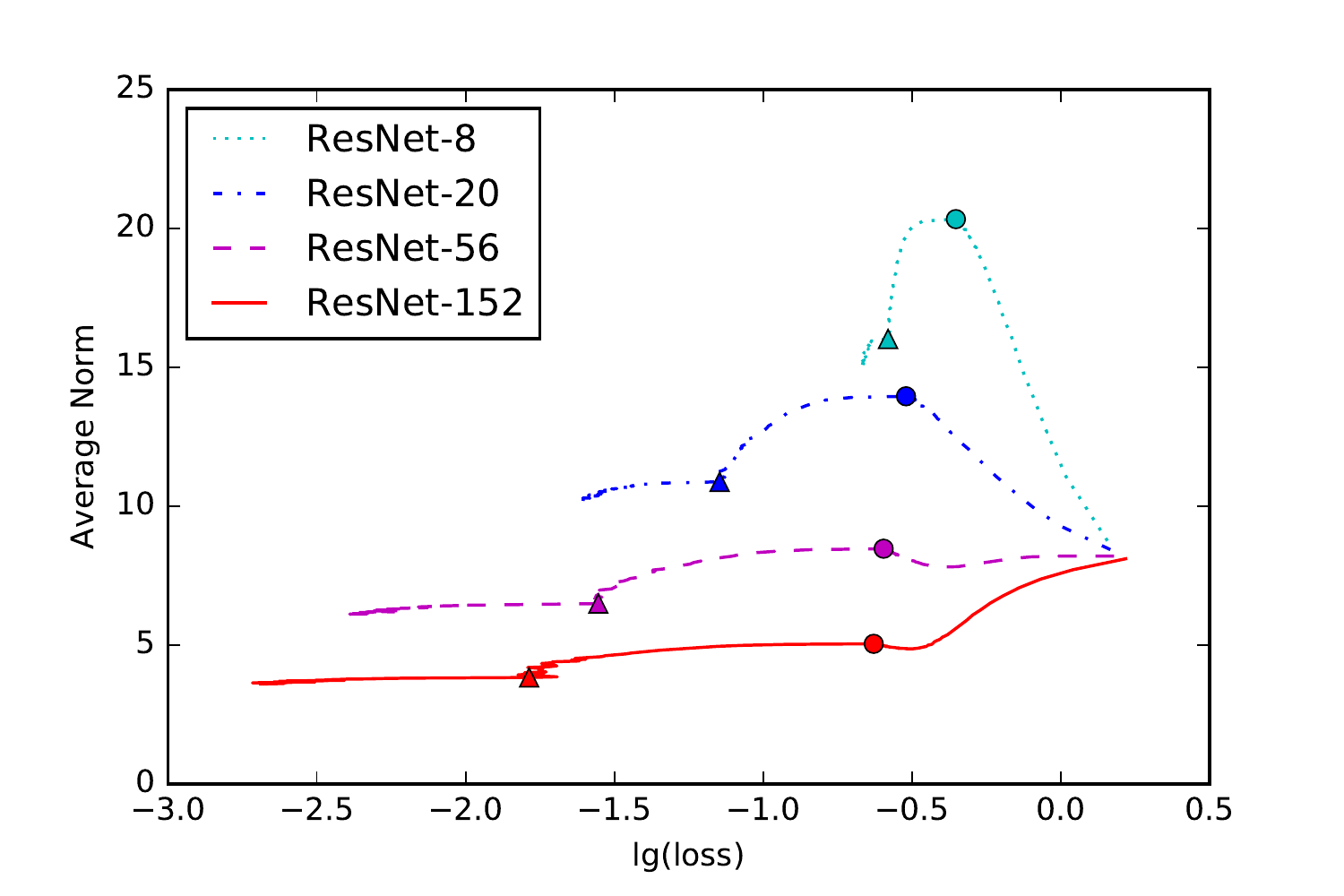}
    \caption{The average Frobenius norms of ResNets of different depths during the training process. The pre-ResNet implementation in \url{https://github.com/facebook/fb.resnet.torch} is used. The learning rate is initialized to 0.1, decreased to 0.01 at the 81\textsuperscript{st} epoch (marked with circles) and decreased to 0.001 at the 122\textsuperscript{nd} epoch (marked with triangles). Each model is trained for 200 epochs.}
    \label{fig:resnet-norm}
\end{figure}

\begin{figure}[htbp]
    \centering
    \includegraphics[width=1.0\columnwidth]{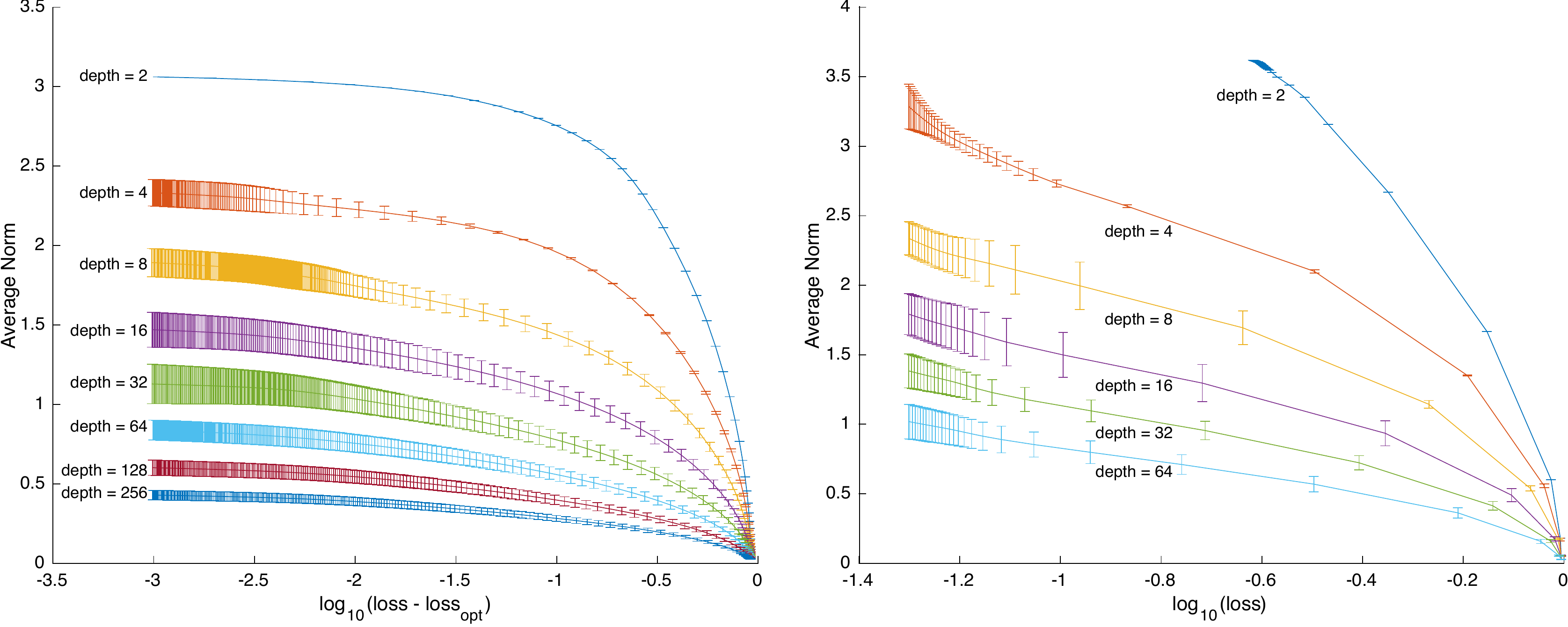}
    \caption{The average Frobenius norms of $2$-shortcut networks of different depths during the training process when zero initialized. \textbf{Left}: Without nonlinearities. \textbf{Right}: With ReLUs at mid positions. }
    \label{fig:weight-norms}
\end{figure}

All these evidences indicate that zero is special in a residual network: as the network becomes deeper, the training tends to end up around it.
Thus, we are looking into the Hessian at zero.
As the zero is a saddle point, in our experiments we use zero initialization with small random perturbations to escape from it. We first Xavier initialize the weight matrices, and then multiply a small constant ($0.01$) to them.

Now we present a simplified ResNet structure with shortcut of length $2$, and prove that as the residual network becomes deeper, there exists a solution whose weight functions have vanishing norm, which is observed in ResNet as we mentioned. This argument is motivated by~\citet{hardt2016identity}.

We concentrate on a special kind of network whose overall transformation can be written as

\begin{equation}
    \label{eq:simplified-resnet}
    y = \prod_{r = 1}^R (W^{r,2} \mathrm{ReLU}(W^{r,1} x + b_r) + I_{d_x}),
\end{equation}

where $b_r \in \mathbb{R}^{d_x}$ is the bias term.
It can seen as a simplified version of ResNet~\citep{he2016identity}.
Note that although this network is not a $2$-shortcut network, its Hessian still follow the form of Theorem~\ref{thm:cond}, thus its condition number is still depth-invariant.

We will also make some assumptions on the training samples.

\begin{assumption}
    \label{asm:training-samples-format}
    Assume $n \ge 3, m \gg 1$, for every $1 \le \mu \le m$, $\lVert x^\mu \rVert_2 = 1, y^\mu \in \{e_1, \cdots, e_{d_y}\}$ where $\{e_1, \cdots, e_{d_y}\}$ are $d_y$ standard basis vectors in $\mathbb{R}^{d_y}$.
\end{assumption}

The formats of training samples describe above are common in practice, where the input data are whitened and the labels are one-hot encoded.
Furthermore, we borrow an mild assumption from~\citet{hardt2016identity} that there exists a minimum distance between every two data points.

\begin{definition}
    The minimum distance of a group of vectors is defined as
    \begin{equation}
        d_{\text{min}}(a_1, a_2, \cdots, a_m) = \min_{1 \le i < j \le m}\lVert a_i - a_j \rVert_2,
    \end{equation}
    where $a_1, a_2, \cdots, a_m \in \mathbb{R}^d$.
\end{definition}

\begin{assumption}
    \label{asm:xs-min-distance}
    There exists a minimum distance $\rho$ between all the sample points and all the labels, i.e.
    \begin{equation}
        d_{\text{min}}(x^1, \cdots, x^m, y^1, \cdots, y^m) = \rho
    \end{equation}
\end{assumption}

As pointed out in~\citet{hardt2016identity}, this assumption can be satisfied by adding a small noise to the dataset. Given the model and the assumptions, we are ready to present our theorem whose proof can be found in Appendix~\ref{app:weight-norm-proof}.

\begin{theorem}
    \label{thm:weight-norm}
    Suppose the training samples satisfy Assumption~\ref{asm:training-samples-format} and Assumption~\ref{asm:xs-min-distance}. There exists a network in the form of Equation~\ref{eq:simplified-resnet} such that for every $1 \le r \le R, 1 \le l \le 2$,
    \begin{equation}
        \lVert W^{r,l} \rVert_F \le O(\sqrt{\frac{m}{n\rho}(m + \frac{1}{\rho})}).
    \end{equation}
\end{theorem}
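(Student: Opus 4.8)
The plan is to exhibit an explicit interpolating network: I will choose the weights so that the composed map sends every training input to its label, $N(x^\mu)=y^\mu$, while keeping each $\|W^{r,l}\|_F$ of the claimed order. The guiding picture, following \citet{hardt2016identity}, is to read the product $\prod_{r=1}^{n}(W^{r,2}\mathrm{ReLU}(W^{r,1}\cdot+b_r)+I)$ as a composition of $n$ near-identity maps that transport the point cloud $\{x^\mu\}$ onto $\{y^\mu\}$ in many tiny steps (identifying the number of residual units $R$ with $n$). The task then splits into two sub-problems: (i) show that a single residual unit can implement a prescribed small, localized displacement of the data, and (ii) show that distributing the total transport over the $n$ units makes each individual step, hence each weight matrix, small.

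For sub-problem (i), I would build each unit as a bump that is active at exactly one data point. Exploiting Assumption~\ref{asm:xs-min-distance}, for the point $p$ to be moved I pick a direction $u$ separating $p$ from the remaining points with some margin $\gamma$; because the inputs lie on the unit sphere (Assumption~\ref{asm:training-samples-format}), the choice $u=p$ already gives margin $\gamma\gtrsim\rho^2$ for the first move, and in general a separating $u$ with a quantifiable $\gamma$ exists. I then take $W^{r,1}$ to be (low-rank) proportional to $u$ with scale $s\sim 1/\gamma$ and choose $b_r$ so that $\mathrm{ReLU}(W^{r,1}x+b_r)$ is a thresholded ramp, or a difference-of-ReLUs ``tent'' using a few coordinates, that vanishes at every point except $p$; finally $W^{r,2}$ routes the activation to the desired displacement vector $\delta$. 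A direct computation gives $\|W^{r,1}\|_F\,\|W^{r,2}\|_F\lesssim \|\delta\|/\gamma$, and balancing the two factors yields $\|W^{r,l}\|_F\lesssim\sqrt{\|\delta\|/\gamma}$.

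For sub-problem (ii), I would move each $x^\mu$ to $y^\mu$ along the straight segment between them, split into equal steps, allocating $n/m$ units per point; since $\|y^\mu-x^\mu\|\le 2$, each step has $\|\delta\|\lesssim m/n$. Substituting $\|\delta\|\sim m/n$ together with the separation margin into $\sqrt{\|\delta\|/\gamma}$ reproduces the stated bound once $1/\gamma\sim (m+1/\rho)/\rho$. The hard part is controlling $\gamma$ uniformly and scheduling the moves: as soon as points leave the sphere the clean ``own-direction'' separation is lost, trajectories of different points may cross (forbidding a common separating hyperplane at the crossing time), and several inputs may share a label, so that their targets coincide. I would address this by ordering the updates, keeping every point $\rho/2$-separated throughout (points that share a label are simply merged, which is harmless since they must reach the same $y$), and tracking how the margin degrades from $\rho$; this degradation, which loses a factor up to $m$ because up to $m$ points must be kept simultaneously apart along the transport, is exactly what produces the extra $\sqrt{m+1/\rho}$ in the bound and is the main obstacle of the argument.
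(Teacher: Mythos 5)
Your single-unit gadget is essentially the one the paper uses (Lemma~\ref{lem:modify}): a rank-one $W^{r,1}$ along the moving point's own direction, a bias that thresholds the ReLU so that only that point activates (margin $\Theta(\rho^2)$, from $\langle a_i,a_j\rangle\le 1-\rho^2/8$ when all points are unit vectors at pairwise distance $\ge\rho/2$), and a $W^{r,2}$ that routes the activation to the desired displacement, with the two norms balanced to $\sqrt{8d}/\rho$. The gap is in your sub-problem~(ii), and you have correctly located it but not closed it. Transporting each point along the straight segment from $x^\mu$ to $y^\mu$ breaks the gadget: intermediate points leave the unit sphere, and once a moving point enters the convex hull of the other points there is \emph{no} separating direction $u$ at all, let alone one with a quantifiable margin; likewise two straight trajectories can collide. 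Your proposed remedies (ordering the moves, merging points, tracking a margin that ``degrades by a factor up to $m$'') are left as assertions rather than computations, and that asserted degradation law is doing all the work of producing the $\sqrt{m+1/\rho}$ factor.

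The paper closes this differently and more cleanly: all intermediate configurations are kept \emph{on the unit sphere} (Lemma~\ref{lem:steps}), each column is moved along a minor arc from $x^\mu$ to $y^\mu$, and whenever the arc would violate the separation it detours around the obstacle along a minor arc of radius $\rho/2$. This preserves the minimum distance $\rho/2$ throughout, hence a \emph{uniform, non-degrading} margin $\Theta(\rho^2)$ at every step; the price is paid in path length instead, since up to $m-1$ detours of length $O(\rho)$ each inflate the path to $O(1+\rho m)$. Dividing that length over the roughly $n/m$ units allocated to each point gives a per-step displacement $d=O(m(1+\rho m)/n)$, and $\lVert W^{r,l}\rVert_F=\sqrt{8d}/\rho$ then yields the stated bound. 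So the $m+1/\rho$ factor comes from increased path length under a constant margin, not from margin decay. To complete your argument you would need either to adopt the spherical detour scheme or to actually prove a margin bound for straight-line transport; the convex-hull obstruction suggests the latter is not merely technical.
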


For a specific dataset, $m, \rho$ are fixed, so the above equation can be simplified to
\begin{equation}
    \lVert W^{r,l} \rVert_F \le O(\sqrt{\frac{1}{n}}).
\end{equation}

This indicates that as the network become deeper, there exists a solution that is closer to the zero.
As a result, it is possible that in a zero initialized deep residual network, the weights are not far from the initial point throughout the training process, where the condition number is small, making it easy for gradient decent to optimize the network.

\subsection{Special properties of shortcut $2$ at zero initial point}

We begin with the definition of $k$-th order stationary point.

\begin{definition}
Suppose function $f(x)$ admits $k$-th order Taylor expansion at point $x_0$. We say that the point $x_0$ is a $k$-th order stationary point of $f(x)$ if the corresponding $k$-th order Taylor expansion of $f(x)$ at $x = x_0$ is a constant: $f(x) = f(x_0) + o(\| x - x_0\|_2^k)$.
\end{definition}

Then we make some assumptions on the activation functions.

\begin{assumption}
    \label{asm:activition-functions}
    $\sigma_{\mathrm{mid}}(0) = \sigma_{\mathrm{post}}(0) = 0$ and all of $\sigma_{\mathrm{pre}}^{(k)}(0), \sigma_{\mathrm{mid}}^{(k)}(0), \sigma_{\mathrm{post}}^{(k)}(0), 1\leq k \leq \max(n-1, 2)$ exist.
\end{assumption}

The assumptions hold for most activation functions including tanh, symmetric sigmoid and ReLU~\citep{nair2010rectified}. Note that although ReLU does not have derivatives at zero, one may do a local polynomial approximation to yield $\sigma^{(k)},1\leq k\leq \max(n-1, 2)$.

Now we state our main theorem, whose proof can be found in Appendix~\ref{app:cond-proof}.

\begin{theorem}
    \label{thm:cond}
    Suppose all the activation functions satisfy Assumption~\ref{asm:activition-functions}. For the loss function of an $n$-shortcut network, at point zero,
    \begin{enumerate}
        \item if $n \ge 2$, it is an $(n - 1)$th-order stationary point.
        In particular, if $n \ge 3$, the Hessian is a zero matrix;

        \item if $n = 2$, the Hessian can be written as
        \begin{equation}
            \label{eq:2-shortcut-hessian}
            H =
            \begin{bmatrix}
            \mathbf{0} & A^T \\
            A          & \mathbf{0} \\
                       &            & \mathbf{0} & A^T \\
                       &            & A          & \mathbf{0} \\
                       &            &            &            & \ddots
            \end{bmatrix},
        \end{equation}

        whose condition number is
        \begin{equation}
            \label{eq:2-shortcut-cond}
            \mathrm{cond}(H) = \sqrt{\mathrm{cond}((\Sigma^{X \sigma_{\mathrm{pre}}(X)} - \Sigma^{Y \sigma_{\mathrm{pre}}(X)})^T (\Sigma^{X \sigma_{\mathrm{pre}}(X)} - \Sigma^{Y \sigma_{\mathrm{pre}}(X)}))},
        \end{equation}

        where $A$ only depends on the training set and the activation functions.
        Except for degenerate cases, it is a \emph{strict} saddle point~\citep{ge2015escaping}.

        \item if $n = 1$, the Hessian can be written as

        \begin{equation}
            \label{eq:1-shortcut-hessian}
            H =
            \begin{bmatrix}
            B      & A^T    & A^T    & \cdots & A^T \\
            A      & B      & A^T    & \cdots & A^T \\
            A      & A      & B      &        & \vdots \\
            \vdots & \vdots &        & \ddots & A^T \\
            A      & A      & \cdots & A      & B
            \end{bmatrix}
        \end{equation}
        where $A, B$ only depend on the training set and the activation functions.
    \end{enumerate}
\end{theorem}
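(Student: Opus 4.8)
The plan is to compute the second-order Taylor expansion of the squared-error loss $\mathcal{L}(\theta)=\tfrac12\sum_{\mu}\lVert f(x^\mu;\theta)-y^\mu\rVert_2^2$ directly at $\theta=0$, where $\theta$ stacks all the weight matrices and $f(x^\mu;\cdot)$ is the network output. The starting observation is that Assumption~\ref{asm:activition-functions} gives $\sigma_{\mathrm{mid}}(0)=\sigma_{\mathrm{post}}(0)=0$, so at $\theta=0$ every transformation path outputs $\sigma_{\mathrm{post}}(0)=0$, each residual unit collapses to the identity, the whole network is the identity map, $f(x^\mu;0)=x^\mu$, and the base residual is $e^\mu:=x^\mu-y^\mu$. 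I would then use the standard splitting $\nabla^2\mathcal{L}=\sum_\mu\bigl[(\nabla f^\mu)^\top(\nabla f^\mu)+\sum_i e^\mu_i\,\nabla^2 f^\mu_i\bigr]$ at $\theta=0$, reducing everything to the low-order Taylor coefficients of $f^\mu$.

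The key lemma is a degree count. Expanding one transformation path $\sigma_{\mathrm{post}}(W^{r,n}\sigma_{\mathrm{mid}}(\cdots\sigma_{\mathrm{mid}}(W^{r,1}\sigma_{\mathrm{pre}}(h))))$ about $0$, the identities $\sigma_{\mathrm{mid}}(0)=\sigma_{\mathrm{post}}(0)=0$ force the signal to pick up a factor of $W^{r,l}$ at every stage to survive, so the path is of order $\ge n$ in that unit's weights; propagating through the forward recursion $h_r=h_{r-1}+T^r(h_{r-1})$ gives $f(x;\theta)-x=\sum_r T^r(x)+O(\lVert\theta\rVert^{2n})$ with each $T^r(x)$ of order $\ge n$. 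Consequently $\mathcal{L}(\theta)-\mathcal{L}(0)=O(\lVert\theta\rVert^{n})$, which is exactly the claim that $0$ is an $(n-1)$th-order stationary point; and since $\nabla f^\mu$ (resp.\ $\nabla^2 f^\mu$) vanishes at $0$ for $n\ge2$ (resp.\ $n\ge3$), the Gauss--Newton term drops for $n\ge2$ and the whole Hessian vanishes for $n\ge3$.

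For $n=2$ the only surviving degree-two term is the cross term $c\sum_r W^{r,2}W^{r,1}\sigma_{\mathrm{pre}}(x)$ with $c=\sigma_{\mathrm{post}}'(0)\sigma_{\mathrm{mid}}'(0)$; pure $W^{r,1}$- or $W^{r,2}$-quadratic pieces are pushed to order $\ge3$ by the same $\sigma(0)=0$ mechanism, and cross-unit products are order $\ge4$. Feeding this into the residual-weighted term makes the Hessian block-diagonal over units, each block coupling only $W^{r,1}$ to $W^{r,2}$, i.e.\ the anti-diagonal form $\bigl[\begin{smallmatrix}0&A^\top\\A&0\end{smallmatrix}\bigr]$ of \eqref{eq:2-shortcut-hessian}. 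Collecting coefficients shows that, up to the scalar $c$, $A$ is the linear map $W\mapsto M W^\top$ with $M=\Sigma^{X\sigma_{\mathrm{pre}}(X)}-\Sigma^{Y\sigma_{\mathrm{pre}}(X)}=\sum_\mu(x^\mu-y^\mu)\sigma_{\mathrm{pre}}(x^\mu)^\top$, which depends only on data and activations; its singular values are exactly those of $M$ (each with multiplicity $d_x$). Since $\bigl[\begin{smallmatrix}0&A^\top\\A&0\end{smallmatrix}\bigr]$ has eigenvalues $\pm\sigma_i(A)$, its condition number is $\sigma_{\max}(A)/\sigma_{\min}(A)=\sqrt{\mathrm{cond}(M^\top M)}$ (the scalar $c$ cancels), giving \eqref{eq:2-shortcut-cond}; non-degeneracy of $M$ then yields eigenvalues of both signs, i.e.\ a strict saddle.

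The $n=1$ case is where I expect the real work, because $\nabla f^\mu$ no longer vanishes and the Gauss--Newton term produces genuine cross-unit coupling. Here I would expand the forward pass $h_r=h_{r-1}+\sigma_{\mathrm{post}}(W^{r,1}\sigma_{\mathrm{pre}}(h_{r-1}))$ to second order, tracking how $h_{r-1}$ carries first-order dependence on all earlier weights $W^{s,1}$, $s<r$. The crucial simplification is that, because the zero network is the identity, every intermediate forward/backward Jacobian linking a unit to the output is the identity; this makes the Gauss--Newton coupling between any two distinct units the same matrix $A$, and the diagonal self-term (together with the $\sigma_{\mathrm{post}}''(0)$ curvature inside a unit) the same matrix $B$, for all units, so that \eqref{eq:1-shortcut-hessian} results, the ordering asymmetry of the forward pass placing $A^\top$ above and $A$ below the diagonal after symmetrization. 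The main obstacle is precisely this bookkeeping: isolating the second-order coefficients of a composition of $R$ nonlinear units and verifying the block entries are genuinely unit-independent; the $n\ge2$ cases avoid it because there each per-unit contribution is already of order $\ge2$, forcing all cross-unit interaction to higher order.
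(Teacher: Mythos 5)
Your proposal is correct and follows essentially the same route as the paper: your degree-count observation (each transformation path is of order $\ge n$ in its unit's weights because $\sigma_{\mathrm{mid}}(0)=\sigma_{\mathrm{post}}(0)=0$) is exactly the content of the paper's Lemma~\ref{lem:zero}, and your identification of the $n=2$ block with the bilinear form built from $M=\Sigma^{X \sigma_{\mathrm{pre}}(X)}-\Sigma^{Y \sigma_{\mathrm{pre}}(X)}$, whose singular values repeat $d_x$ times, reproduces the paper's computation of $A$ and of $\mathrm{cond}(H)$. The only difference is organizational: you package the bookkeeping as a Taylor expansion of the network output plus the Gauss--Newton splitting, whereas the paper tracks which mixed partials can be nonzero and invokes a rearrangement/translation-invariance argument (Lemma~\ref{lem:rearange}) to obtain the identical-block structure, including the block-Toeplitz form for $n=1$.
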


Theorem~\ref{thm:cond} shows that the condition numbers of $2$-shortcut networks are depth-invariant with a nice structure of eigenvalues. Indeed, the eigenvalues of the Hessian $H$ at the zero initial point are multiple copies of $\pm \sqrt{\mathrm{eigs}(A^T A)}$, and the number of copies is equal to the number of shortcut connections.

The Hessian at zero initial point for the $1$-shortcut network follows block Toeplitz structure, which has been well studied in the literature. In particular, its condition number tends to explode as the number of layers increase~\citep{gray2006toeplitz}.

To get intuitive explanations of the theorem, imagine changing parameters in an $n$-shortcut network.
One has to change at least $n$ parameters to make any difference in the loss.
So zero is an $(n - 1)$th-order stationary point.
Notice that the higher the order of a stationary point, the more difficult for a first order method to escape from it.

On the other hand, if $n = 2$, one will have to change two parameters in the same residual unit but different weight matrices to affect the loss, leading to a clear block diagonal Hessian.

\section{Experiments}

We compare networks with Xavier initialization~\citep{glorot2010understanding}, networks with orthogonal initialization~\citep{saxe2013exact} and $2$-shortcut networks with zero initialization.
The training dynamics of $1$-shortcut networks are similar to that of linear networks with orthogonal initialization in our experiments.
Setup details can be found in Appendix~\ref{app:exp-setup}.

\subsection{Initial point}

We first compute the initial condition numbers for different kinds of linear networks with different depths.

\begin{figure}[htbp]
    \centering
    \includegraphics[width=0.8\columnwidth]{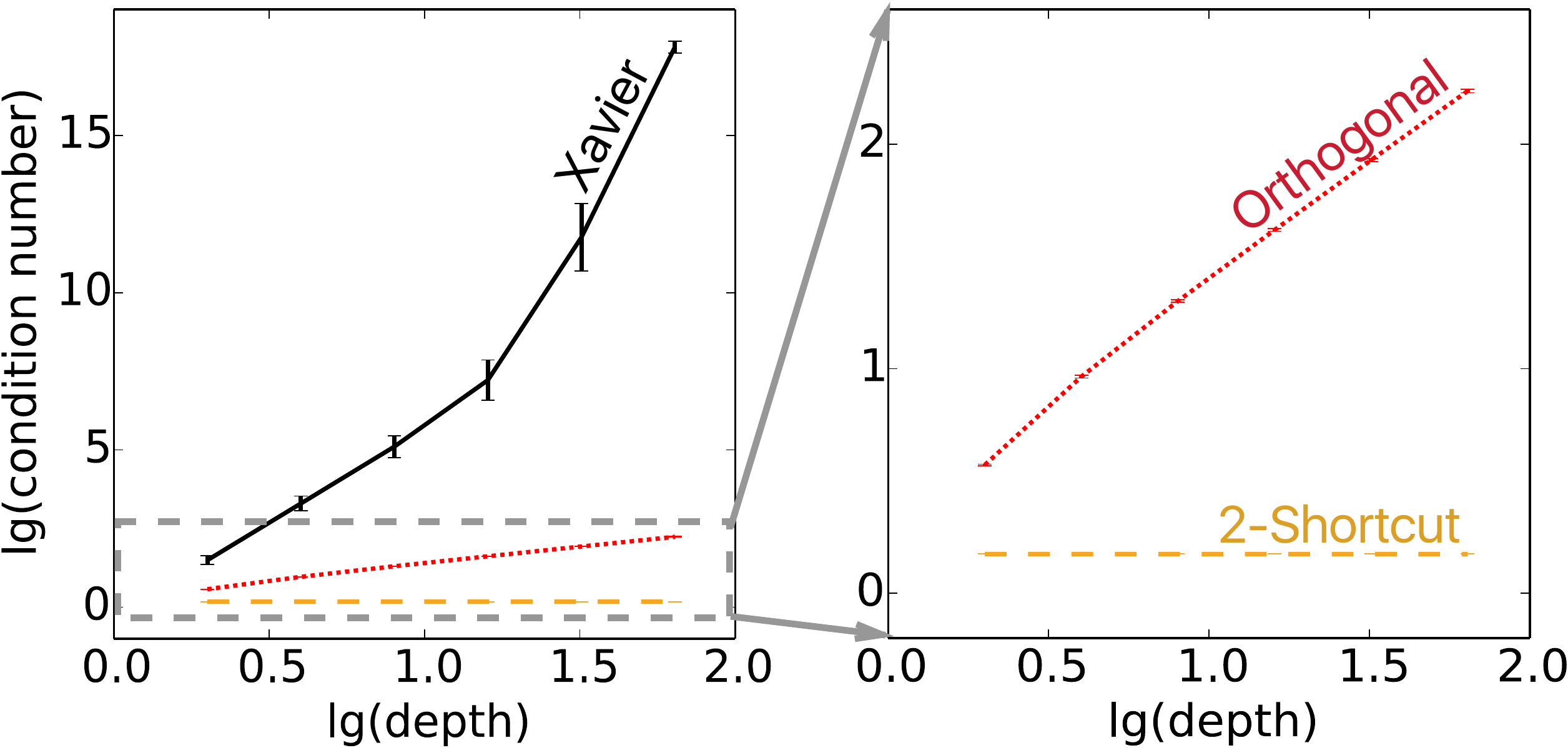}
    \caption{Initial condition numbers of Hessians for different linear networks as the depths of the networks increase. Means and standard deviations are estimated based on 10 runs.}
    \label{fig:cond-depth}
\end{figure}

As can be seen in Figure~\ref{fig:cond-depth}, \textbf{$2$-shortcut linear networks have constant condition numbers as expected.}
On the other hand, when using Xavier or orthogonal initialization in linear networks, the initial condition numbers will go to infinity as the depths become infinity, making the networks hard to train.
This also explains why orthogonal initialization is helpful for a linear network, as its initial condition number grows slower than the Xavier initialization.

\subsection{Learning dynamics}

Having a good beginning does not guarantee an easy trip on the loss surface. In order to depict the loss surfaces encountered from different initial points, we plot the maxima and 10\textsuperscript{th} percentiles (instead of minima, as they are very unstable) of the absolute values of Hessian’s eigenvalues at different losses.

\begin{figure}[htbp]
    \centering
    \includegraphics[width=1.0\columnwidth]{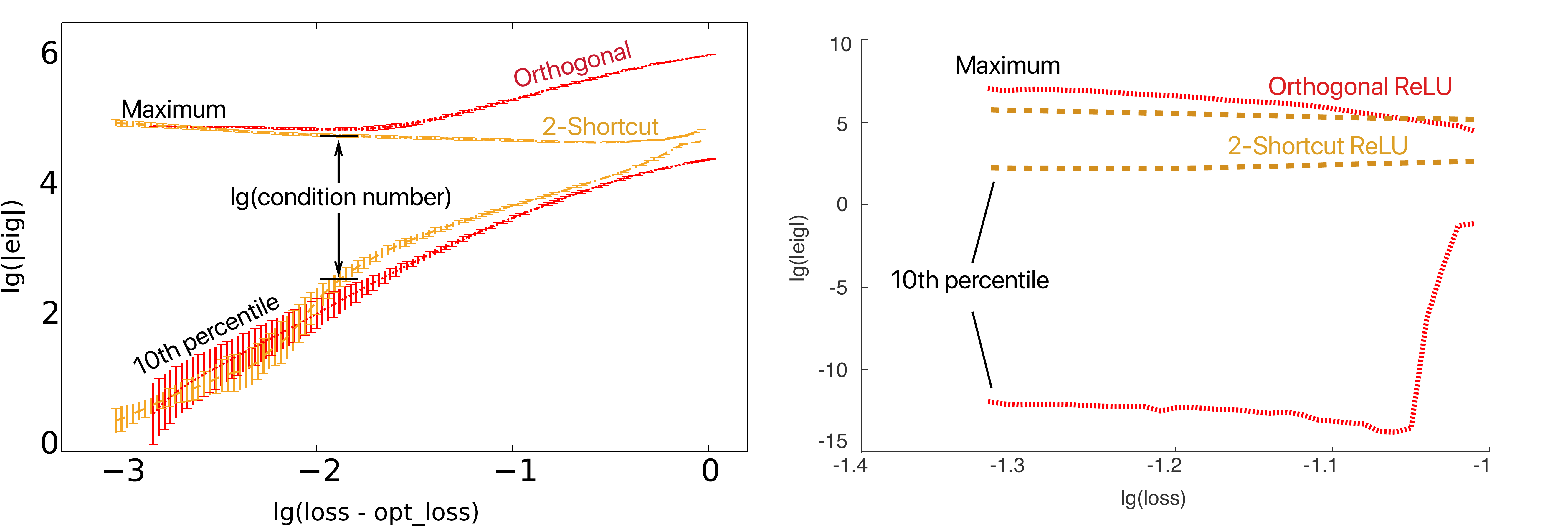}
    \caption{Maxima and 10\textsuperscript{th} percentiles of absolute values of eigenvalues at different losses when the depth is 16.
    For each run, eigenvalues at different losses are calculated using linear interpolation.}
    \label{fig:eigs}
\end{figure}

As shown in Figure~\ref{fig:eigs}, the condition numbers of $2$-shortcut networks at different losses are always smaller, especially when the loss is large.
Also, notice that the condition numbers roughly evolved to the same value for both orthogonal and $2$-shortcut linear networks. This may be explained by the fact that the minimizers, as well as any point near them, have similar condition numbers.

Another observation is the changes of negative eigenvalues ratios. \textit{Index} (ratio of negative eigenvalues) is an important characteristic of a critical point. Usually for the critical points of a neural network, the larger the loss the larger the index~\citep{dauphin2014identifying}. In our experiments, the index of a $2$-shortcut network is always smaller, and drops dramatically at the beginning, as shown in Figure~\ref{fig:index},~left. This might make the networks tend to stop at low critical points.

\begin{figure}[htbp]
    \centering
    \includegraphics[width=0.8\columnwidth]{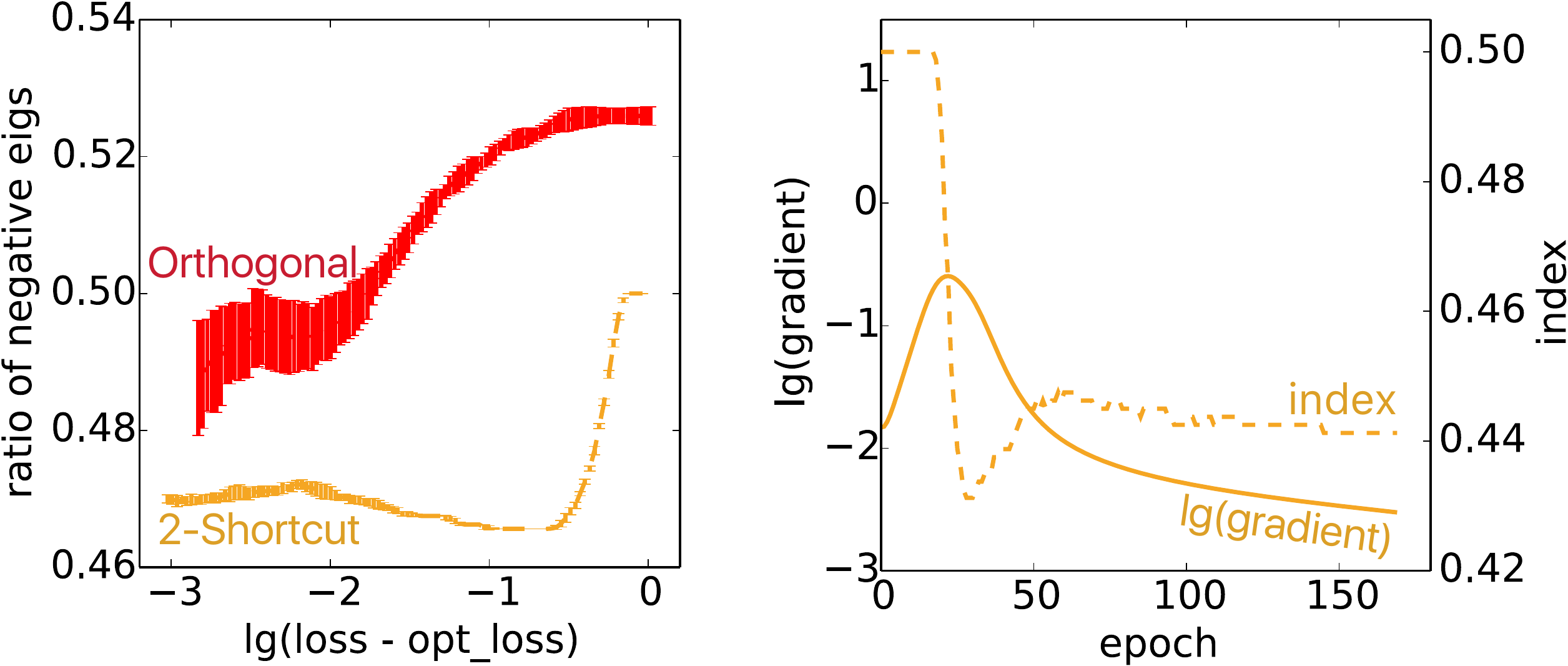}
    \caption{\textbf{Left}: ratio of negative eigenvalues at different losses when the depth is 16.
    For each run, indexes at different losses are calculated using linear interpolation.
    \textbf{Right}: the dynamics of gradient and index of a $2$-shortcut linear network in a single run. The gradient reaches its maximum while the index drops dramatically, indicating moving toward negative curvature directions.}
    \label{fig:index}
\end{figure}

This is because the initial point is near a saddle point, thus it tends to go towards negative curvature directions, eliminating some negative eigenvalues at the beginning. This phenomenon matches the observation that the gradient reaches its maximum when the index drops dramatically, as shown in Figure~\ref{fig:index},~right.

\subsection{Learning results}

\subsubsection{MNIST dataset}

We run different networks for 1000 epochs using different learning rates at log scale, and compare the average final losses corresponding to the optimal learning rates.

\begin{figure}[htbp]
    \centering
    \includegraphics[width=0.9\columnwidth]{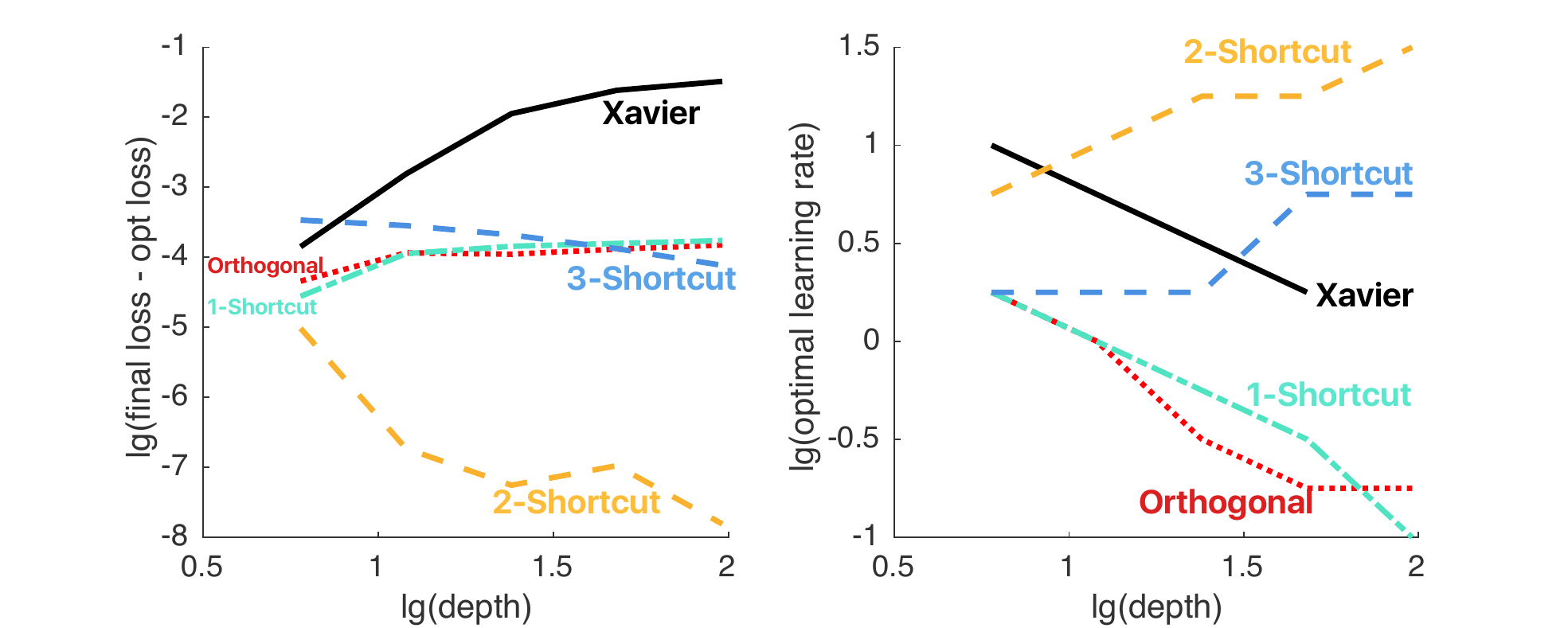}
    \caption{\textbf{Left}: Optimal final losses of different linear networks. \textbf{Right}: Corresponding optimal learning rates. When the depth is 96, the final losses of Xavier with different learning rates are basically the same, so the optimal learning rate is omitted as it is very unstable.}
    \label{fig:result}
\end{figure}

Figure~\ref{fig:result} shows the results for linear networks. Just like their depth-invariant initial condition numbers, the final losses of $2$-shortcut linear networks stay close to optimal as the networks become deeper. Higher learning rates can also be applied, resulting in fast learning in deep networks.

Then we add ReLUs to the \textit{mid} positions of the networks. To make a fair comparison, the numbers of ReLU units in different networks are the same when the depths are the same, so $1$-shortcut and $3$-shortcut networks are omitted. The result is shown in Figure~\ref{fig:relu-result}.

\begin{figure}[htbp]
    \centering
    \includegraphics[width=0.8\columnwidth]{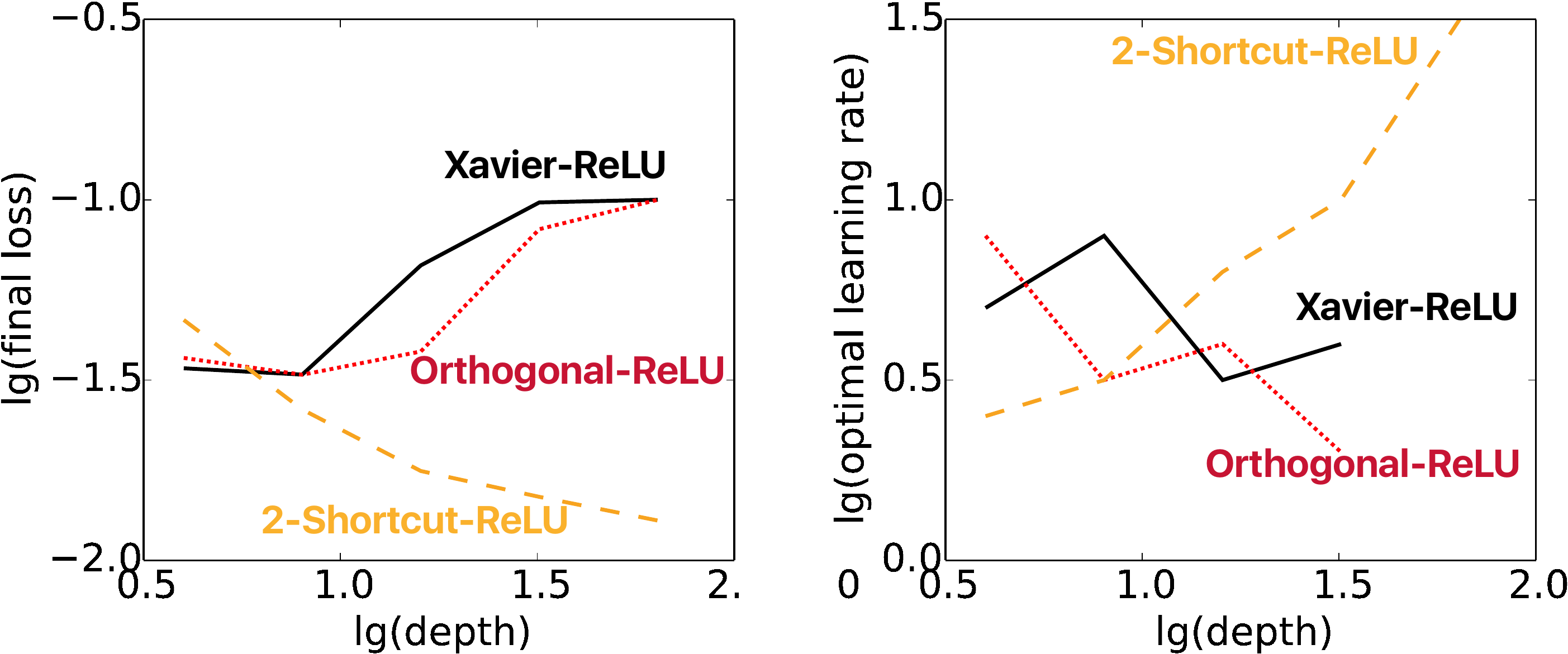}
    \caption{\textbf{Left}: Optimal final losses of different networks with ReLUs in \textit{mid} positions. \textbf{Right}: Corresponding optimal learning rates. Note that as it is hard to compute the minimum losses with ReLUs, we plot the $\log_{10}(\text{final loss})$ instead of $\log_{10}(\text{final loss} - \text{optimal loss})$. When the depth is 64, the final losses of Xavier-ReLU and orthogonal-ReLU with different learning rates are basically the same, so the optimal learning rates are omitted as they are very unstable.}
    \label{fig:relu-result}
\end{figure}

Note that because of the nonlinearities, the optimal losses vary for different networks with different depths. It is usually thought that deeper networks can represent more complex models, leading to smaller optimal losses. However, our experiments show that linear networks with Xavier or orthogonal initialization have difficulties finding these optimal points, while $2$-shortcut networks find these optimal points easily as they did without nonlinear units.

\subsubsection{CIFAR-10 dataset}

To show the effect of shortcut depth on a larger dataset, we modify the pre-ResNet implementation in \url{https://github.com/facebook/fb.resnet.torch} to make it possible to change shortcut depth while keeping the total number of parameters fixed. The default stopping criteria are used. The result is shown in Figure~\ref{fig:cifar10}. As shown in the figure, when the network becomes extremely deep ($>400$), only ResNets with shortcut $2$ gain advantages from the growth of depth, where other networks suffer from degradation as the network becomes deeper.

\begin{figure}[htbp]
    \centering
    \includegraphics[width=0.7\columnwidth]{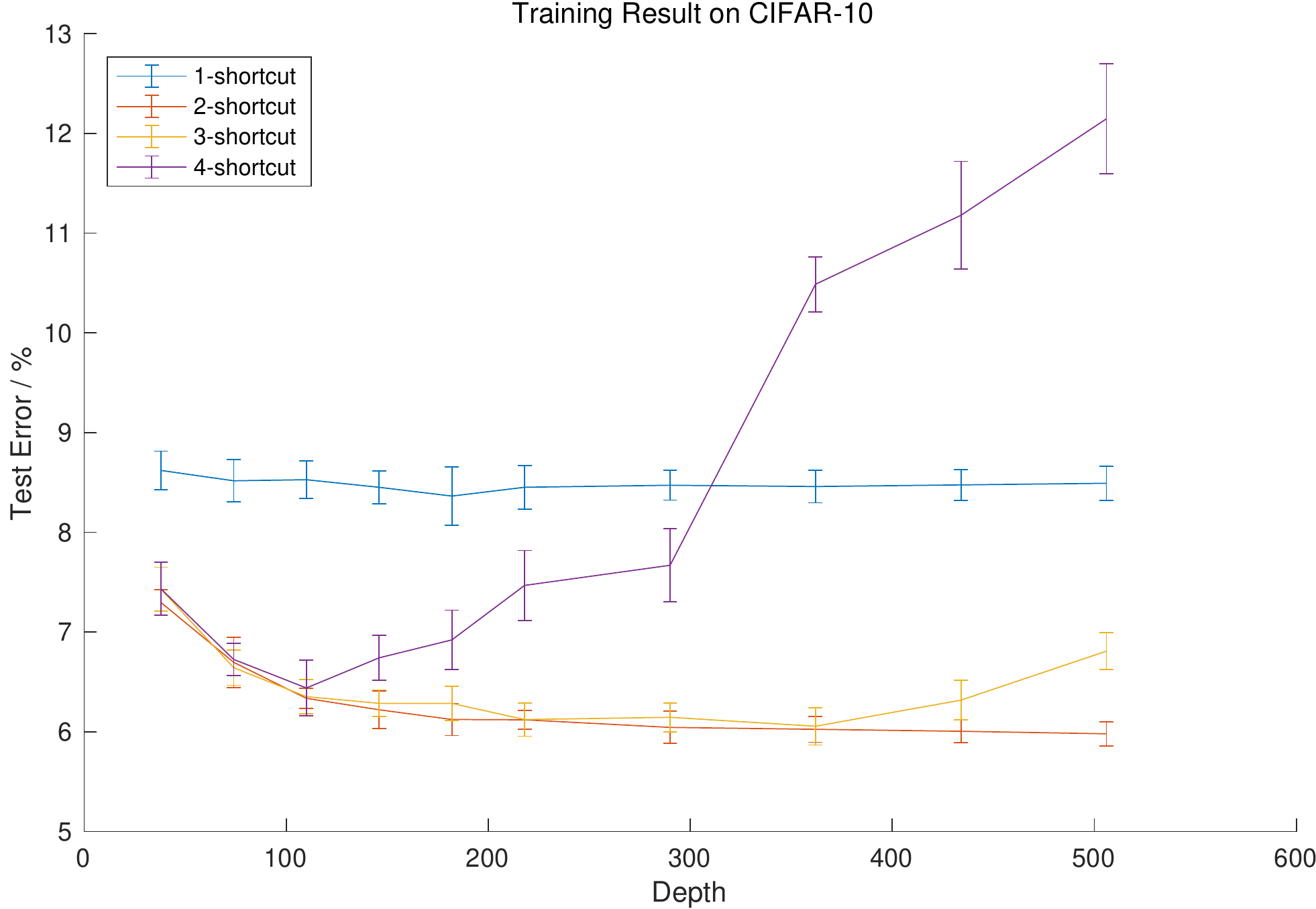}
    \caption{CIFAR-10 results of ResNets with different depths and shortcut depths. Means and standard deviations are estimated based on 10 runs. ResNets with a shortcut depth larger than 4 yield worse results and are omitted in the figure.}
    \label{fig:cifar10}
\end{figure}

\bibliographystyle{plainnat}
\bibliography{demystifying-resnet}

\appendix

\section{Proofs of theorems}

\subsection{Proof of Theorem~\ref{thm:weight-norm}}

\label{app:weight-norm-proof}
\begin{lemma}
    \label{lem:modify}
    Given matrix $A, A'$ such that
    \begin{align}
        \label{eq:matrix-a}
        A & = \begin{bmatrix}
            a_1 & \cdots & a_{i - 1} & a_i & a_{i + 1} \cdots & a_m
        \end{bmatrix}, \\
        \label{eq:matrix-b}
        A' & = \begin{bmatrix}
            a_1 & \cdots & a_{i - 1} & a_i' & a_{i + 1} & \cdots & a_m
        \end{bmatrix},
    \end{align}
    where $a_1, \cdots, a_i, \cdots, a_m$ and $a_i'$ are unit vectors in $\mathbb{R}^{d_x}$, $d_{\text{min}}(A), d_{\text{min}}(A) \ge \rho / 2$, $\lVert a_i - a_i' \rVert_2 = d$. There exists $W^1, W^2 \in \mathbb{R}^{d_x \times d_x}, b \in \mathbb{R}^{d_x}$ such that
    \begin{equation}
        \label{eq:residual-unit}
        W^2 \mathrm{ReLU}(W^1 A + b \cdot \mathbf{1}) + A = A',
    \end{equation}
    \begin{equation}
        \label{eq:min-distance}
        \lVert W^1 \rVert_F = \lVert W^2 \rVert_F = \frac{\sqrt{8d}}{\rho}.
    \end{equation}
\end{lemma}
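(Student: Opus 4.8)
The plan is to construct $W^1$ and $W^2$ as \emph{rank-one} matrices realizing a single ``detector'' neuron that fires only on the column $a_i$. Since $A$ and $A'$ agree on every column except the $i$-th, the residual block must act as the identity on each $a_j$ with $j \neq i$ and must shift $a_i$ to $a_i'$. Equivalently, I want the ReLU term $W^2\mathrm{ReLU}(W^1 A + b\cdot\mathbf{1})$ to vanish on every column $a_j$, $j\neq i$, and to equal $a_i' - a_i$ on the column $a_i$, which immediately gives \eqref{eq:residual-unit}.

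First I would extract a separation bound from the minimum-distance hypothesis $d_{\min}(A)\ge \rho/2$. Because all columns are unit vectors, $\lVert a_i - a_j\rVert_2^2 = 2 - 2\langle a_i, a_j\rangle \ge \rho^2/4$ yields $\langle a_i, a_j\rangle \le 1 - \rho^2/8$ for every $j\neq i$, whereas $\langle a_i, a_i\rangle = 1$. Taking the detecting direction $w = a_i$ and the threshold $t = 1 - \rho^2/8$ therefore separates $a_i$ (giving pre-activation $1-t = \rho^2/8 > 0$) from all other columns (pre-activation $\le 0$). Concretely I set $W^1 = c\,e_1 w^T$ and $b = -ct\,e_1$ for a scale $c>0$ to be fixed later, where $e_1$ is the first coordinate vector. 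Then column $j$ of $W^1 A + b\cdot\mathbf{1}$ equals $c(\langle w, a_j\rangle - t)\,e_1$, so after applying ReLU only column $i$ survives, carrying the value $c(1-t)\,e_1 = (c\rho^2/8)\,e_1$.

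Next I would choose $W^2 = \tfrac{8}{c\rho^2}(a_i' - a_i)\,e_1^T$, which sends the surviving activation $(c\rho^2/8)\,e_1$ exactly to $a_i' - a_i$ and annihilates the zero columns; this establishes $W^2\mathrm{ReLU}(W^1 A + b\cdot\mathbf{1}) = A' - A$. To finish, I would balance the two Frobenius norms. Since $e_1$ and $w$ are unit vectors, $\lVert W^1\rVert_F = c$ and $\lVert W^2\rVert_F = \tfrac{8}{c\rho^2}\lVert a_i' - a_i\rVert_2 = \tfrac{8d}{c\rho^2}$, whose product $8d/\rho^2$ is independent of $c$. Setting the two norms equal forces $c = \sqrt{8d}/\rho$ and gives the common value $\lVert W^1\rVert_F = \lVert W^2\rVert_F = \sqrt{8d}/\rho$, which is precisely \eqref{eq:min-distance}.

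The main delicate point is calibrating the threshold to the \emph{exact} separation bound $1 - \rho^2/8$: this is what produces the constant $8$ in the statement, and it is the step I would verify most carefully, checking that equality in the distance bound still yields a nonpositive pre-activation so that ReLU cleanly zeroes it. I would also dispatch the degenerate case $d = 0$ separately, where $a_i = a_i'$ and the choice $W^1 = W^2 = 0$ trivially satisfies both \eqref{eq:residual-unit} and \eqref{eq:min-distance}.
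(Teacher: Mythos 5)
Your construction is exactly the paper's: the paper takes $W^1$ with first row $\tfrac{\sqrt{8d}}{\rho}a_i^T$, bias $\tfrac{\sqrt{8d}}{\rho}(\tfrac{\rho^2}{8}-1)$, and $W^2$ with first column $\tfrac{\sqrt{8/d}}{\rho}(a_i'-a_i)$, which is precisely your rank-one detector $W^1=c\,e_1a_i^T$, $b=-c(1-\rho^2/8)e_1$, $W^2=\tfrac{8}{c\rho^2}(a_i'-a_i)e_1^T$ with $c=\sqrt{8d}/\rho$. Your write-up is correct and in fact supplies the verification the paper leaves as ``trivial to check,'' including the threshold calibration $\langle a_i,a_j\rangle\le 1-\rho^2/8$ and the degenerate case $d=0$ that the paper's formulas do not cover.
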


\begin{proof}
    Let
    \begin{align}
        W^1 & = \frac{\sqrt{8d}}{\rho} \begin{bmatrix}
            a_i^T \\
            \mathbf{0} \\
            \vdots \\
            \mathbf{0}
        \end{bmatrix}, \\
        W^2 & = \frac{\sqrt{8/d}}{\rho} \begin{bmatrix}
            a_i' - a_i & \mathbf{0} & \cdots & \mathbf{0}
        \end{bmatrix}, \\
        b & =\frac{\sqrt{8d}}{\rho} (\frac{\rho^2}{8} - 1),
    \end{align}
    It is trivial to check that Equation~\ref{eq:residual-unit} and \ref{eq:min-distance} hold.
\end{proof}

Lemma~\ref{lem:modify} constructs a residual unit that change one column of its input by $d$.
Now we are going to proof that by repeating this step, the input matrix $X$ can be transfered into the output matrix $Y$.

\begin{lemma}
    \label{lem:steps}
    Given that $d_x \ge 3$, there exists a sequence of matrix $X^0, X^1, \cdots, X^s$ where
    \begin{equation}
        X^0 = X, X^{s} = Y,
    \end{equation}
    \begin{equation}
        s = m \lceil \frac{(\frac{\rho(m - 1)}{2} + 1) \pi}{d} \rceil,
    \end{equation}
    such that for every $1 \le i \le s$, $X^{i - 1}$ and $X^i$ conform to Lemma~\ref{lem:modify} with a distance smaller than $d$.
\end{lemma}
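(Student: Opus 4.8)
\emph{Proof proposal.} The plan is to transform $X$ into $Y$ one column at a time, so that each elementary step changes a single column and can therefore be realized by a residual unit through Lemma~\ref{lem:modify}. Write $X=[x^1,\dots,x^m]$ and $Y=[y^1,\dots,y^m]$, all unit vectors. I would process the columns in order $\mu=1,\dots,m$: while column $\mu$ is moved from its current value to $y^\mu$, the other $m-1$ columns are held fixed, with indices $<\mu$ already sitting at their targets $y^{1},\dots,y^{\mu-1}$ and indices $>\mu$ still at $x^{\mu+1},\dots,x^m$. Thus the construction reduces to producing, for each $\mu$, a discrete path of unit vectors from $x^\mu$ to $y^\mu$ whose consecutive entries differ in Euclidean distance by at most $d$ and which keeps the moving point at distance $\ge\rho/2$ from every fixed column.

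For one column I would first build a continuous path on the unit sphere $S^{d_x-1}$ and then discretize it. Take the minimizing great-circle arc, whose length equals the angle between the two unit vectors and is hence at most $\pi$. The only constraints are the $m-1$ forbidden caps $\{v:\lVert v-p\rVert_2<\rho/2\}$ around the fixed columns $p$; since the original points are $\rho$-separated, these caps are disjoint and contain neither endpoint. Here the hypothesis $d_x\ge 3$ enters: on a sphere of dimension $\ge 2$ finitely many point-centered caps do not disconnect it, so the geodesic can be rerouted around any cap it meets. Detouring around one cap adds at most half the circumference of its boundary circle; a cap of chord radius $\rho/2$ has angular radius $\theta$ with $2\sin(\theta/2)=\rho/2$, so its boundary circle has radius $\sin\theta=2\sin(\theta/2)\cos(\theta/2)=\tfrac{\rho}{2}\cos(\theta/2)\le\tfrac{\rho}{2}$, and the extra length is at most $\pi\sin\theta\le\pi\rho/2$. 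Summing the base geodesic and at most $m-1$ such detours yields a path of total length at most $\bigl(\tfrac{\rho(m-1)}{2}+1\bigr)\pi$.

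I would then cut this path into equal arcs of arc-length at most $d$, needing at most $\lceil\,\bigl(\tfrac{\rho(m-1)}{2}+1\bigr)\pi/d\,\rceil$ pieces; since chord length never exceeds arc length, consecutive vertices differ in Euclidean distance by at most $d$, matching the hypothesis of Lemma~\ref{lem:modify}. Performing this for all $m$ columns produces exactly $s=m\lceil\bigl(\tfrac{\rho(m-1)}{2}+1\bigr)\pi/d\rceil$ steps. To close, I would verify the invariant $d_{\min}\ge\rho/2$ for every intermediate matrix: the fixed columns keep their mutual distances $\ge\rho$, while the moving column stays $\ge\rho/2$ from each of them by cap-avoidance, so each consecutive pair $X^{i-1},X^i$ satisfies the $d_{\min}\ge\rho/2$ requirement of Lemma~\ref{lem:modify} and differs in a single column.

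The main obstacle I expect is making the detour estimate fully rigorous: one must show that a length-$\le\pi$ geodesic can always be perturbed around the disjoint caps with \emph{total} added length bounded by $(m-1)\pi\rho/2$, rather than merely invoking connectivity. This requires controlling the worst case where the geodesic passes through a cap center, where several caps cluster near the arc, or where $x^\mu$ or $y^\mu$ lies just outside a cap, and confirming the cap-geometry identities in the admissible regime $\rho\le O(1)$ forced by unit-norm data. Once this geometric length bound is secured, the reduction to one column at a time and the discretization into chord-$\le d$ steps are routine.
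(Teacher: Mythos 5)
Your proposal follows essentially the same route as the paper's own proof: transform $X$ into $Y$ column by column, route each moving column along a minor arc on the unit sphere that detours around the $\rho/2$-caps of the fixed columns, bound the total arc length by $\bigl(\tfrac{\rho(m-1)}{2}+1\bigr)\pi$, and discretize into chords of length at most $d$ so that each step is an instance of Lemma~\ref{lem:modify}. In fact you supply more detail (the cap-geometry estimate and the role of $d_x\ge 3$) than the paper, which states the detour length bound without further justification.
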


\begin{proof}
    In order to complete the transformation, we can modify $X$ column by column.
    For each column vector, in order to move it while preserving a minimum distance, we can draw a minor arc on the unit sphere connecting the starting and the ending point, bypassing each obstacle by a minor arc with a radius of $\rho / 2$ if needed, as shown in Figure~\ref{fig:sphere}. The length of the path is smaller than $(\frac{\rho(m - 1)}{2} + 1) \pi$, thus $\lceil \frac{(\frac{\rho(m - 1)}{2} + 1) \pi}{d} \rceil$ steps are sufficient to keep each step shorter than $d$. Repeating the process for $m$ times will give us a legal construction.
\end{proof}

\begin{figure}[htbp]
    \centering
    \includegraphics[width=0.3\columnwidth]{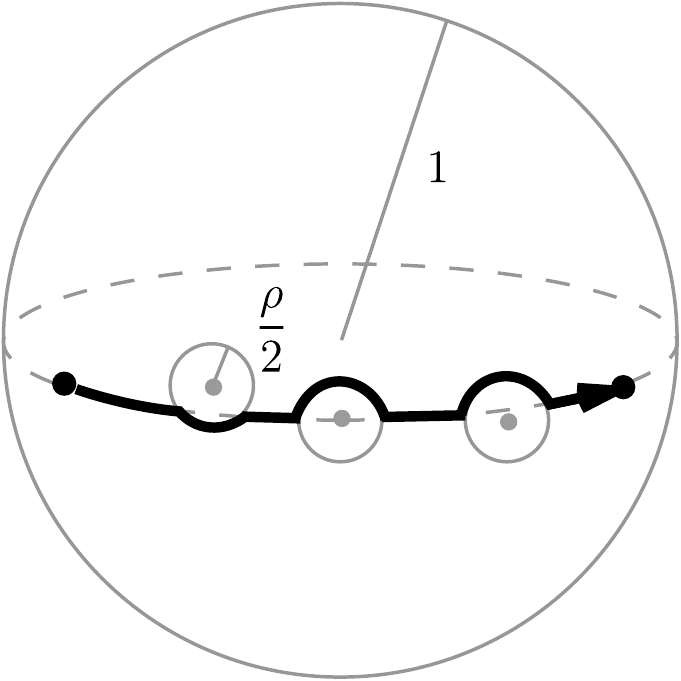}
    \caption{The path of a moving vector that preserves a minimum distance of $\rho / 2$.}
    \label{fig:sphere}
\end{figure}

Now we can prove Theorem~\ref{thm:weight-norm} with the all these lemmas above.

\begin{proof}[Proof of Theorem~\ref{thm:weight-norm}]
    Using Lemma~\ref{lem:steps}, we have $d \ge O(m(\rho m + 1))$. Then use Lemma~\ref{lem:modify}, we can get a construction that satisfies
    \begin{equation}
        \lVert W^{r,l} \rVert_F \le O(\sqrt{\frac{m}{n\rho}(m + \frac{1}{\rho})}).
    \end{equation}
\end{proof}

\subsection{Proof of Theorem~\ref{thm:cond}}
\label{app:cond-proof}
\begin{definition}
    The elements in Hessian of an $n$-shortcut network is defined as
    \begin{equation}
        H_{\mathrm{ind}(w_1), \mathrm{ind}(w_2)} = \frac{\partial^2 L}{\partial w_1 \partial w_2},
    \end{equation}
    where $L$ is the loss function, and the indices $\mathrm{ind}(\cdot)$ is ordered lexicographically following the four indices $(r, l, j, i)$ of the weight variable $w^{r,l}_{i,j}$. In other words, the priority decreases along the index of shortcuts, index of weight matrix inside shortcuts, index of column, and index of row.

\end{definition}

Note that the collection of all the weight variables in the $n$-shortcut network is denoted as $\mathbf{w}$. We study the behavior of the loss function in the vicinity of $\mathbf{w} = \mathbf{0}$.

\begin{lemma}
    \label{lem:zero}
    Assume that $w_1 = w^{r_1,l_1}_{i_1,j_1}, \cdots, w_N = w^{r_N,l_N}_{i_N,j_N}$ are $N$ parameters of an $n$-shortcut network. If $\frac{\partial^2 L}{\partial w_1 \cdots \partial w_N}\Big |_{\mathbf{w} = \mathbf{0}}$ is nonzero, there exists $r$ and $k_1, \cdots, k_n$ such that $r_{k_m} = r$ and $l_{k_m} = m$ for $m = 1, \cdots, n$.
\end{lemma}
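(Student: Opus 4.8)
The plan is to translate the statement about a high-order mixed partial derivative at the origin into a purely combinatorial statement about which monomials can appear in the Taylor expansion of the loss. Writing $\mathbf{w}$ for the collection of all weights, and assuming (as the paper does for $\sigma$ via local polynomial approximation) that $L$ admits a Taylor polynomial at $\mathbf{w}=\mathbf{0}$ to the required order, the mixed derivative $\frac{\partial^N L}{\partial w_1 \cdots \partial w_N}\big|_{\mathbf{w}=\mathbf{0}}$ is nonzero if and only if the monomial $w_1 w_2 \cdots w_N$ occurs with a nonzero coefficient in that expansion: differentiating once in each distinct variable and evaluating at $\mathbf{0}$ kills every monomial except $w_1 \cdots w_N$, whose surviving value is exactly its coefficient. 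Hence it suffices to prove the following monomial property: \emph{every monomial occurring in the expansion of $L$ that contains any entry of some $W^{r,l}$ must contain at least one entry from each of $W^{r,1}, \ldots, W^{r,n}$.} Granting this, if the derivative is nonzero then the monomial $w_1 \cdots w_N$ contains the unit-$r_1$ variable $w_1$, so it contains a variable from every layer $m = 1, \ldots, n$ of unit $r_1$; taking $r = r_1$ and letting $k_m$ be an index with $(r_{k_m}, l_{k_m}) = (r, m)$ gives exactly the claim.

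To establish the monomial property I would first analyze a single residual unit in isolation. Let $z$ be the input to unit $r$ and write its output as $F_r(z) = z + g_r(z)$, where the residual term is the composition $g_r(z) = \sigma_{\mathrm{post}}(W^{r,n} \, \sigma_{\mathrm{mid}}(W^{r,n-1} \cdots \sigma_{\mathrm{mid}}(W^{r,1} \sigma_{\mathrm{pre}}(z)) \cdots))$. Setting $b_0 = \sigma_{\mathrm{pre}}(z)$ and $b_l = \sigma_{\mathrm{mid}}(W^{r,l} b_{l-1})$, I would prove by induction on $l$ that, viewed as a power series in the entries of $W^{r,1}, \ldots, W^{r,l}$, every monomial of $b_l$ contains at least one entry from each of $W^{r,1}, \ldots, W^{r,l}$ (and in particular $b_l$ has no constant term for $l \ge 1$). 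The base case $b_0$ is constant in these weights, so the condition is vacuous. For the step, $W^{r,l} b_{l-1}$ multiplies one fresh entry of $W^{r,l}$ onto each monomial of $b_{l-1}$, which by hypothesis already carries all of $W^{r,1}, \ldots, W^{r,l-1}$; crucially, because Assumption~\ref{asm:activition-functions} gives $\sigma_{\mathrm{mid}}(0) = 0$, the Taylor series of $\sigma_{\mathrm{mid}}$ has no constant term, so applying it element-wise only forms products of positive-degree inputs and preserves the property. The same $\sigma_{\mathrm{post}}(0) = 0$ argument at the top layer shows every monomial of $g_r$ contains an entry from each of $W^{r,1}, \ldots, W^{r,n}$.

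Next I would propagate this to the whole network and the loss by composition. Since the weights of unit $r$ enter the output $\Phi = F_R \circ \cdots \circ F_1(x)$ only through $g_r$ (the input $z_r$ to unit $r$ depends only on earlier units, and the later units $F_{r+1}, \ldots, F_R$ do not involve $W^{r,\cdot}$), expanding $\Phi = G(z_r + g_r(z_r))$ in powers of $g_r$ shows that any monomial of $\Phi$ containing a unit-$r$ variable sits inside a $g_r$ factor and therefore inherits all $n$ layers of unit $r$; the dependence on the other units merely rides along in the coefficients. Finally, $L$ is a smooth function of the per-example outputs $\Phi^\mu$, and the property ``every monomial containing a unit-$r$ variable contains all $n$ layers of unit $r$'' is closed under sums and products of power series — a product of two monomials contains a unit-$r$ variable iff at least one factor does, in which case that factor already supplies all $n$ layers. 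Expanding $L$ as a series in the $\Phi^\mu$ thus preserves the property for every $r$ simultaneously, which is exactly the monomial property needed.

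The main obstacle, and the only place the hypotheses really bite, is the bookkeeping through the element-wise nonlinearities: I must ensure that passing a weight-dependent quantity through $\sigma_{\mathrm{mid}}$ or $\sigma_{\mathrm{post}}$ cannot create a monomial that drops a layer or reintroduces a constant term, and this is precisely what $\sigma_{\mathrm{mid}}(0) = \sigma_{\mathrm{post}}(0) = 0$ guarantees (no constant term in their Taylor series, hence every output monomial is a product of positive-degree input monomials). A secondary point to handle carefully is the legitimacy of the formal Taylor-coefficient argument for ReLU, where derivatives at $0$ are taken in the local-polynomial-approximation sense described after Assumption~\ref{asm:activition-functions}; once the expansions exist to order $N$, all steps above are finite polynomial manipulations and go through unchanged.
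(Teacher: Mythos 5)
Your proof is correct, but it takes a genuinely different route from the paper's. The paper argues by contraposition in two lines: if no residual unit has all $n$ of its weight matrices represented among $w_1,\dots,w_N$, then on the subspace where only $w_1,\dots,w_N$ vary (all other weights pinned to zero), every unit has an entirely zero weight matrix on its transformation path; since $\sigma_{\mathrm{mid}}(0)=\sigma_{\mathrm{post}}(0)=0$, that zero matrix annihilates the whole path, every unit reduces to the identity, the loss is constant on that subspace, and the mixed derivative vanishes. You instead prove a stronger structural fact about the Taylor expansion of $L$ at the origin --- that every monomial touching unit $r$ must contain an entry from each of $W^{r,1},\dots,W^{r,n}$ --- via an induction through the layers of a single unit and a closure-under-sums-and-products argument through the composition of units and the loss, and then read off the lemma from the support of the series. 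Both arguments hinge on exactly the same hypothesis ($\sigma_{\mathrm{mid}}(0)=\sigma_{\mathrm{post}}(0)=0$ forbids constant terms that would let a layer drop out), but the paper's restriction-to-a-subspace argument is far shorter, while your monomial characterization is more informative: it immediately explains the block sparsity pattern of the Hessian used later in the proof of Theorem~\ref{thm:cond} and handles repeated indices among $w_1,\dots,w_N$ transparently (a case you should state you are covering, since the stationary-point claim needs derivatives with repeated variables; your ``distinct variables'' phrasing is the only spot needing a one-line patch, as the surviving monomial is then $\prod_i w_i^{m_i}$ with the same variable support).
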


\begin{proof}
    Assume there does not exist such $r$ and $k_1, \cdots, k_n$, then for all the shortcut units $r = 1, \cdots, R$, there exists a weight matrix $l$ such that none of $w_1, \cdots, w_N$ is in $W^{r,l}$, so all the transformation paths are zero, which means $W = I_{d_x}$. Then $\frac{\partial^2 L}{\partial w_1 \cdots \partial w_N}\Big |_{\mathbf{w} = \mathbf{0}} = 0$, leading to a contradiction.
\end{proof}

\begin{lemma}
    \label{lem:rearange}
    Assume that $w_1 = w^{r_1,l_1}_{i_1,j_1}, w_2 = w^{r_2,l_2}_{i_2,j_2}, r_1 \le r_2$.
    Let $L_0(w_1, w_2)$ denotes the loss function with all the parameters except $w_1$ and $w_2$ set to 0, $w'_1 = w^{1, l_1}_{i_1,j_1}, w'_2 = w^{1 + \mathbbm{1}(r_1 \neq r_2), l_2}_{i_2,j_2}$.
    Then $\frac{\partial^2 L_0(w_1, w_2)}{\partial w_1 \partial w_2}|_{(w_1,w_2) = \mathbf{0}} = \frac{\partial^2 L_0(w'_1, w'_2)}{\partial w'_1 \partial w'_2}|_{(w'_1, w'_2)=\mathbf{0}}$.
\end{lemma}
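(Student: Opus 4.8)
The plan is to exploit the fact that, once all weights except $w_1$ and $w_2$ are frozen at zero, the network collapses to a composition involving at most two nontrivial residual units, and that every unit has the identical architecture, so the surviving map cannot depend on the absolute indices $r_1,r_2$ — only on whether the two weights share a unit and on their intra-unit positions. Concretely, I would first record the elementary fact that a residual unit with all of its weights set to zero computes the identity map: reading the transformation path from the inside out, $W^{r,1}\sigma_{\mathrm{pre}}(u)=0$, and then $\sigma_{\mathrm{mid}}(0)=0$ and finally $\sigma_{\mathrm{post}}(0)=0$ by Assumption~\ref{asm:activition-functions}, so the path contributes nothing and the unit output equals its input. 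Hence in $L_0(w_1,w_2)$ every unit other than $r_1$ and $r_2$ is exactly the identity and can be dropped from the composition $g_R\circ\cdots\circ g_1$ of the unit maps $g_r=\mathcal{T}^r+\mathrm{id}$, where $\mathcal{T}^r$ denotes the transformation path of unit $r$.

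Then I would split into the two cases encoded by $\mathbbm{1}(r_1\neq r_2)$. If $r_1=r_2=r$, only unit $r$ is nontrivial, so $f=\mathcal{T}^r+\mathrm{id}$, where $\mathcal{T}^r$ carries the two weights $w_1,w_2$ in layers $l_1,l_2$ at positions $(i_1,j_1),(i_2,j_2)$. Because all units are architecturally identical, $\mathcal{T}^r$ is the very same function of $(x,w_1,w_2)$ for every $r$; in particular it coincides with the $r=1$ version, i.e.\ with $w'_1=w^{1,l_1}_{i_1,j_1}$, $w'_2=w^{1,l_2}_{i_2,j_2}$. If instead $r_1<r_2$, exactly units $r_1$ and $r_2$ survive and, respecting the product convention in which the lower-indexed unit acts first, $f=(\mathcal{T}^{r_2}+\mathrm{id})\circ(\mathcal{T}^{r_1}+\mathrm{id})$. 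Again by homogeneity of the architecture the first-applied factor depends on $w_1$ exactly as the unit-$1$ factor depends on $w'_1$, and the second-applied factor depends on $w_2$ exactly as the unit-$2$ factor depends on $w'_2$, with $w'_1=w^{1,l_1}_{i_1,j_1}$ and $w'_2=w^{2,l_2}_{i_2,j_2}$.

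In either case the reduced network function, and therefore the loss $L_0$ obtained by summing the squared residuals over the fixed training set, is \emph{literally the same function} of the two scalar arguments before and after canonicalization: $L_0(w_1,w_2)=L_0(w'_1,w'_2)$ identically in a neighborhood of the origin. Differentiating this identity once in each argument and evaluating at zero yields the claimed equality of mixed second derivatives. I expect the only delicate points to be bookkeeping rather than conceptual: making sure the composition order is handled consistently so that the ``first-applied unit'' matches between the original and canonical configurations (which is exactly why the map sends $r_1\mapsto 1$ and $r_2\mapsto 1+\mathbbm{1}(r_1\neq r_2)$ rather than symmetrically), and verifying that freezing the other weights at zero genuinely removes their units from the composition, which is where $\sigma_{\mathrm{mid}}(0)=\sigma_{\mathrm{post}}(0)=0$ is essential. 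No estimate is needed; the whole statement is an exact invariance coming from the translation symmetry of a stack of identical residual units about the zero weight configuration.
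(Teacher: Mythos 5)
Your proposal is correct and follows essentially the same route as the paper: both arguments observe that every unit whose weights are all frozen at zero acts as the identity (via $\sigma_{\mathrm{mid}}(0)=\sigma_{\mathrm{post}}(0)=0$), so the composition collapses to the one or two surviving units, and by the architectural homogeneity of the stack the reduced loss is literally the same function of $(w_1,w_2)$ as of $(w'_1,w'_2)$, whence the mixed second derivatives agree. Your write-up is somewhat more explicit than the paper's (which simply invokes reordering of identity units while preserving the order of $r_1$ and $r_2$), but the underlying idea is identical.
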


\begin{proof}
    As all the residual units expect unit $r_1$ and $r_2$ are identity transformations, reordering residual units while preserving the order of units $r_1$ and $r_2$ will not affect the overall transformation, i.e. $L_0(w_1, w_2)|_{w_1 = a, w_2 = b} = L'_0(w'_1, w'_2)|_{w'_1 = a, w'_2 = b}$. So $\frac{\partial^2 L_0(w_1, w_2)}{\partial w_1 \partial w_2}|_{(w_1,w_2) = \mathbf{0}} = \frac{\partial^2 L_0(w'_1, w'_2)}{\partial w'_1 \partial w'_2}|_{(w'_1, w'_2)=\mathbf{0}}$.
\end{proof}

\begin{proof}[Proof of Theorem~\ref{thm:cond}]
    Now we can prove Theorem~\ref{thm:cond} with the help of the previously established lemmas.
    \begin{enumerate}
        \item Using Lemma~\ref{lem:zero}, for an $n$-shortcut network, at zero, all the $k$-th order partial derivatives of the loss function are zero, where $k$ ranges from $1$ to $n-1$. Hence, the initial point zero is a $(n - 1)$th-order stationary point of the loss function.

        \item Consider the Hessian in $n = 2$ case.
        Using Lemma~\ref{lem:zero} and Lemma~\ref{lem:rearange}, the form of Hessian can be directly written as Equation~(\ref{eq:2-shortcut-hessian}), as illustrated in Figure~\ref{fig:block-hessian}.

        \begin{figure}[htbp]
            \centering
            \includegraphics[width=0.5\columnwidth]{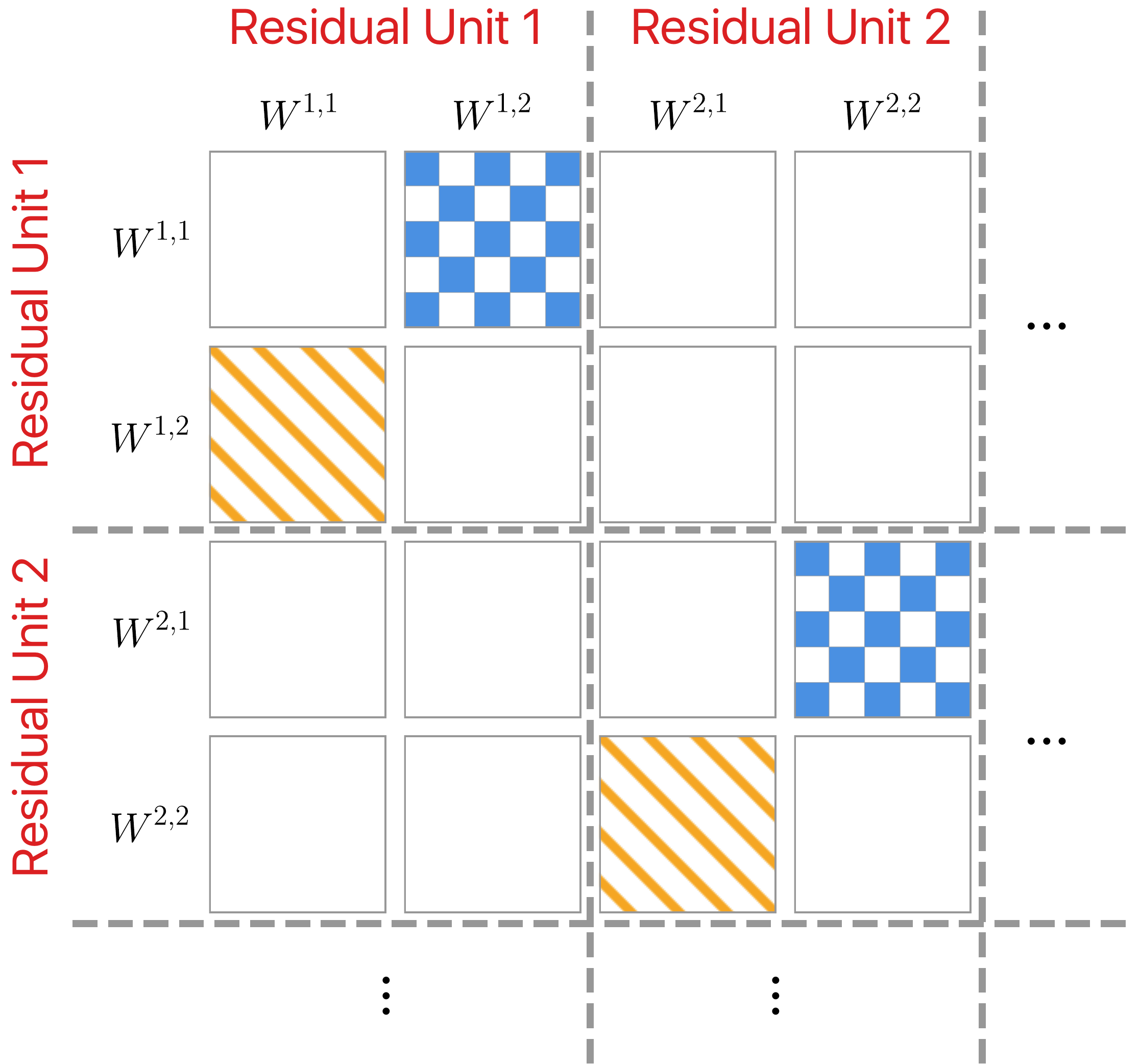}
            \caption{The Hessian in $n = 2$ case. It follows from Lemma~\ref{lem:zero} that only off-diagonal subblocks in each diagonal block, i.e., the blocks marked in orange (slash) and blue (chessboard), are non-zero. From Lemma~\ref{lem:rearange}, we conclude the translation invariance and that all blocks marked in orange (slash) (resp. blue (chessboard)) are the same. Given that the Hessian is symmetric, the blocks marked in blue and orange are transposes of each other, and thus it can be directly written as Equation~(\ref{eq:2-shortcut-hessian}).}
            \label{fig:block-hessian}
        \end{figure}

        So we have
        \begin{equation}
            \mathrm{eigs}(H) = \mathrm{eigs}(\begin{bmatrix}
                \mathbf{0} & A^T \\
                A          & \mathbf{0}
            \end{bmatrix}) = \pm \sqrt{\mathrm{eigs}(A^T A)}.
        \end{equation}
        Thus $\mathrm{cond}(H) = \sqrt{\mathrm{cond}(A^T A)}$, which is depth-invariant. Note that the dimension of $A$ is $d_x^2 \times d_x^2$.

        To get the expression of $A$, consider two parameters that are in the same residual unit but different weight matrices, i.e. $w_1 = w^{r,2}_{i_1,j_1}, w_2 = w^{r,1}_{i_2,j_2}$.

If $j_1 = i_2$, we have
        \begin{equation}
            \begin{split}
                A_{(j_1 - 1) d_x + i_1, (j_2 - 1) d_x + i_2} &= \frac{\partial^2 L}{\partial w_1 \partial w_2} \Big |_{\mathbf{w} = \mathbf{0}}\\
                &= \frac{\partial^2 \sum_{\mu = 1}^m  \frac{1}{2m} (y_{i_1}^\mu - x_{i_1}^\mu - \sigma_{\mathrm{post}}(w_1 \sigma_{\mathrm{mid}}(w_2 \sigma_{\mathrm{pre}}(x_{j_2}^\mu))))^2}{\partial w_1 \partial w_2}\Big |_{\mathbf{w} = \mathbf{0}} \\
                &= \frac{\sigma_{\mathrm{mid}}'(0) \sigma_{\mathrm{post}}'(0)}{m}\sum_{\mu = 1}^m \sigma_{\mathrm{pre}}(x_{j_2}^\mu) (x_{i_1}^\mu - y_{i_1}^\mu).
            \end{split}
        \end{equation}
Else, we have $A_{(j_1 - 1) d_x + i_1, (j_2 - 1) d_x + i_2} = 0$.

Noting that $A_{(j_1 - 1) d_x + i_1, (j_2 - 1) d_x + i_2}$ in fact only depends on the two indices $i_1,j_2$ (with a small difference depending on whether $j_1 = i_2$), we make a $d_x \times d_x$ matrix with rows indexed by $i_1$ and columns indexed by $j_2$, and the entry at $(i_1, j_2)$ equal to $A_{(j_1 - 1) d_x + i_1, (j_2 - 1) d_x + i_2}$. Apparently, this matrix is equal to $\sigma_{\mathrm{mid}}'(0) \sigma_{\mathrm{post}}'(0) (\Sigma^{X \sigma_{\mathrm{pre}}(X)} - \Sigma^{Y \sigma_{\mathrm{pre}}(X)})$ when $j_1 = i_2$, and equal to the zero matrix when $j_1 \neq i_2$.

        To simplify the expression of $A$, we rearrange the columns of $A$ by a permutation matrix, i.e.
        \begin{equation}
            A' = AP,
        \end{equation}
        where $P_{ij} = 1$ if and only if $i = ((j - 1) \bmod d_x) d_x + \lceil \frac{j}{d_x} \rceil$. Basically it permutes the $i$-th column of $A$ to the $j$-th column.

        Then we have
        \begin{equation}
            A = \sigma_{\mathrm{mid}}'(0) \sigma_{\mathrm{post}}'(0) \begin{bmatrix}
            \Sigma^{X \sigma_{\mathrm{pre}}(X)} - \Sigma^{Y \sigma_{\mathrm{pre}}(X)} \\
            & \ddots \\
            & & \Sigma^{X \sigma_{\mathrm{pre}}(X)} - \Sigma^{Y \sigma_{\mathrm{pre}}(X)}
            \end{bmatrix} P^T.
        \end{equation}

        So the eigenvalues of $H$ becomes
        \begin{equation}
            \mathrm{eigs}(H) = \pm \sigma_{\mathrm{mid}}'(0) \sigma_{\mathrm{post}}'(0) \sqrt{\mathrm{eigs}((\Sigma^{X \sigma_{\mathrm{pre}}(X)} - \Sigma^{Y \sigma_{\mathrm{pre}}(X)})^T (\Sigma^{X \sigma_{\mathrm{pre}}(X)} - \Sigma^{Y \sigma_{\mathrm{pre}}(X)}))},
        \end{equation}

        which leads to Equation~(\ref{eq:2-shortcut-cond}).

        \item Now consider the Hessian in the $n = 1$ case.
        Using Lemma~\ref{lem:rearange}, the form of Hessian can be directly written as Equation~(\ref{eq:1-shortcut-hessian}).

        To get the expressions of $A$ and $B$ in $\sigma_{\mathrm{pre}}(x) = \sigma_{\mathrm{post}}(x) = x$ case, consider two parameters that are in the same residual units, i.e. $w_1 = w^{r,1}_{i_1,j_1}, w_2 = w^{r,1}_{i_2,j_2}$.

We have
\begin{align}
B_{(j_1 - 1) d_x + i_1, (j_2 - 1) d_x + i_2} & = \frac{\partial^2 L}{\partial w_1 \partial w_2}\Big |_{\mathbf{w} = \mathbf{0}} \\
& = \begin{cases}  \frac{1}{m} \sum_{\mu = 1}^m x^\mu_{j_1} x^\mu_{j_2} & i_1 = i_2 \\ 0 & i_1 \neq i_2 \\ \end{cases}
\end{align}

        Rearrange the order of variables using $P$, we have
        \begin{equation}
            B = P \begin{bmatrix}
            \Sigma^{XX} \\
            & \ddots \\
            & & \Sigma^{XX}
            \end{bmatrix} P^T.
        \end{equation}

        Then consider two parameters that are in different residual units, i.e. $w_1 = w^{r_1,1}_{i_1,j_1}, w_2 = w^{r_2,1}_{i_2,j_2}, r_1 > r_2$.

We have
\begin{align}
A_{(j_1 - 1) d_x + i_1, (j_2 - 1) d_x + i_2} & = \frac{\partial^2 L}{\partial w_1 \partial w_2}\Big |_{\mathbf{w} = \mathbf{0}} \\
& = \begin{cases}
\frac{1}{m} \sum_{\mu = 1}^m (x^\mu_{i_1} - y^\mu_{i_1}) x^\mu_{j_2} + x^\mu_{j_1} x^{\mu}_{j_2} & j_1 = i_2, i_1 = i_2 \\
\frac{1}{m} \sum_{\mu = 1}^m (x^\mu_{i_1} - y^\mu_{i_1}) x^\mu_{j_2} & j_1 = i_2, i_1 \neq i_2 \\
\frac{1}{m} \sum_{\mu = 1}^m x^\mu_{j_1} x^\mu_{j_2}  & j_1 \neq i_2, i_1 = i_2 \\
0 & j_1 \ne i_2 , i_1 \ne i_2\\
\end{cases}
\end{align}

        In the same way, we can rewrite $A$ as
        \begin{equation}
            A = \begin{bmatrix}
            \Sigma^{XX}-\Sigma^{YX} \\
            & \ddots \\
            & & \Sigma^{XX}-\Sigma^{YX}
            \end{bmatrix}P^T + B.
        \end{equation}
    \end{enumerate}
\end{proof}

\section{Experiment setup on MNIST}
\label{app:exp-setup}

We took the experiments on whitened versions of MNIST.
Ten greatest principal components are kept for the dataset inputs.
The dataset outputs are represented using one-hot encoding.
The network was trained using gradient descent.
For every epoch, the Hessians of the networks were calculated using the method proposed in~\citet{bishop1992exact}.
As the $|\lambda|_{\text{min}}$ of Hessian is usually very unstable, we calculated $\frac{|\lambda|_{\text{max}}}{|\lambda|_{(0.1)}}$ to represent condition number instead, where $|\lambda|_{(0.1)}$ is the 10\textsuperscript{th} percentile of the absolute values of eigenvalues.

As \textit{pre}, \textit{mid} or \textit{post} positions are not defined in linear networks without shortcuts, when comparing Xavier or orthogonal initialized linear networks to $2$-shortcut networks, we added ReLUs at the same positions in linear networks as in $2$-shortcuts networks.

\end{document}